\newacronym{MC}{MC}{Monte Carlo}
\newacronym{MCMC}{MCMC}{Markov chain Monte Carlo}
\newacronym{QMC}{QMC}{quasi-Monte Carlo}
\newacronym{CV}{CV}{\emph{control variate}}
\newacronym{CF}{CF}{\emph{control functional}}
\newacronym{SGD}{SGD}{stochastic gradient descent}
\DeclareMathOperator*{\argmin}{arg\,min}
\newenvironment{talign*}
 {\let\displaystyle\textstyle\csname align*\endcsname}
 {\endalign}
\newenvironment{talign}
{\let\displaystyle\textstyle\align}
{\endalign}
\newcommand{\papertitle}{Scalable Control Variates for Monte Carlo Methods via Stochastic Optimization}
\begin{document}

\mainmatter 

\title{Scalable Control Variates for Monte Carlo Methods via Stochastic Optimization}

\titlerunning{Scalable CVs for Monte Carlo Methods via Stochastic Optimization}  

\author{Shijing Si\inst{1} \and Chris. J. Oates\inst{2} \and Andrew B. Duncan\inst{3} \and Lawrence Carin\inst{1} \and Fran\c{c}ois-Xavier Briol\inst{4} }

\authorrunning{Si, Oates, Duncan, Carin, Briol} 

\institute{Duke University
\and  Newcastle University \and Imperial College London \and University College London } 

\maketitle        

\begin{abstract}
Control variates are a well-established tool to reduce the variance of Monte Carlo estimators.
However, for large-scale problems including high-dimensional and large-sample settings, their advantages can be outweighed by a substantial computational cost. 
This paper considers control variates based on Stein operators, presenting a framework that encompasses and generalizes existing approaches that use polynomials, kernels and neural networks. 
A learning strategy based on minimising a variational objective through stochastic optimization is proposed, leading to scalable and effective control variates. 
Novel theoretical results are presented to provide insight into the variance reduction that can be achieved, and an empirical assessment, including applications to Bayesian inference, is provided in support. 
\keywords{Control variates, Monte Carlo, Variance reduction, Stochastic gradient descent}
\end{abstract}


\section{Introduction}
\label{sec:introduction}

This paper focuses on the approximation of the integral of an arbitrary function $f:\mathbb{R}^d \rightarrow \mathbb{R}$ with respect to a distribution $\Pi$, denoted $\Pi[f] := \int f \mathrm{d}\Pi$. It will be assumed that $\Pi$ admits a smooth and everywhere positive Lebesgue density $\pi$ such that the gradient of $\log \pi$ can be pointwise evaluated. 
This situation is typical in Bayesian statistics, where $\Pi$ represents a posterior distribution and, to circumvent this intractability, \gls{MCMC} methods are used. 
Nevertheless, the ergodic average of \gls{MCMC} output converges at a slow rate proportional to $n^{-1/2}$ and, for finite chain length $n$, there can be considerable stochasticity associated with the \gls{MCMC} output. 

A \gls{CV} is a variance reduction technique for \gls{MC} methods, including \gls{MCMC}. 
Given a test function $f$, the general approach is to identify another function, $g$, such that the variance of the estimator with $f$ replaced by $f - g$ is smaller than that of the original estimator, and such that $\Pi[g] = 0$, so the value of the integral is unchanged.
Such a $g$ is called a \gls{CV}. 
\gls{CV}s are widely-used in statistics and machine learning, including for the simulation of Markov processes \citep{Newton1994,Henderson2002}, stochastic optimization \citep{Wang2013}, stochastic gradient MCMC \citep{Baker2018}, reinforcement learning \citep{Greensmith2004,Grathwohl2017,Liu2018}, variational inference \citep{Paisley2012,Ranganath2014,Ranganath2016} and Bayesian evidence evaluation \citep{Oates2016thermo}.

Given a test function $f$, the problem of selecting an appropriate \gls{CV} is non-trivial and a variety of approaches have been proposed. 
Our discussion focuses only on the setting where $\pi$ is provided only up to an unknown normalization constant; i.e., the setting where \gls{MCMC} is typically used.
The most widely-used approach to selection of a \gls{CV} is based on $g = \nabla \log \pi$ and simple (e.g., linear) transformations thereof \citep{Assaraf1999,Mira2013,Friel2014,Papamarkou2014}; note that under weak tail conditions on $\pi$, the \gls{CV} property $\Pi[g] = 0$ is assured. Recently several authors have proposed the use of more complicated or even non-parametric transformations, such as based on high order polynomials \citep{South2019}, kernels \citep{Oates2017,Oates2016CF2,Barp2018} and neural networks (NNs) \citep{Grathwohl2017,Liu2018,Zhu2018}. These new approaches have been shown empirically -- and theoretically, in the case of kernels \citep{Barp2018,Oates2016CF2}-- to provide substantial reduction in variance for \gls{MCMC}. 

These recent developments are closely related to Stein's method \citep{Stein1972,Chen2011,Ross2011,Anastasiou2021}, a tool used in probability theory to quantify how well one distribution $\Pi'$ approximates another distribution $\Pi$.
Recall that, given a collection of functions $g$ for which $\Pi[g] = 0$ is satisfied, Stein's method uses $\sup_g \Pi'[g]$ as a means of quantifying the difference between $\Pi$ and $\Pi'$. 
As a byproduct, researchers in this field have constructed a large range of functions $g$ that can be used as \gls{CV}s. 
Although Stein's method has recently been applied to a variety of problems including \gls{MCMC} convergence assessment \citep{Gorham2015,Gorham2017,Gorham2016}, goodness-of-fit testing \citep{Chwialkowski2016,Liu2016,Yang2018}, variational inference \citep{Ranganath2014,Ranganath2016}, estimators for models with intractable likelihoods \citep{Barp2019,Liu2018Fisher} and the approximation of complex posterior distributions \citep{Liu2016,Liu2016SVGD,Liu2016BBIS,Chen2018,Chen2019,Riabiz2020}, a unified account of how Stein's method can be exploited for the construction of \gls{CV}s, encompassing existing polynomial, kernel and NN transformations, has yet to appear. 

The organization and contributions of this paper are as follows. The literature on polynomial, kernel, and NN \gls{CV}s is reviewed in \Cref{sec:background}. An efficient learning strategy for \gls{CV}s based on stochastic optimization is proposed in \Cref{sec:methodology}. A theoretical analysis is provided in \Cref{sec:theory}, which provides general sufficient conditions for variance reduction to be achieved. Finally, an empirical assessment is provided in \Cref{sec:experiments} and covers a range of synthetic test problems, as well as problems arising in the Bayesian inferential context.

\section{Background}
\label{sec:background}

In what follows, it is assumed that an approximate sample $\{x_i\}_{i=1}^n \subset \mathbb{R}^d$ from $\Pi$ have been obtained and our goal is to construct an estimator for $\Pi[f]$ of the form $\frac{1}{n-m} \sum_{i=m+1}^n f(x_i) - g(x_i)$ where $g$ is a CV learned using a subset of size $m \leq n$ from the $\{x_i\}_{i=1}^n$.

Several approaches have been proposed. 
One approach is to use a Taylor expansion of the test function $f$ \citep{Paisley2012,Wang2013}, or perhaps a polynomial approximation to $f$ learned from regression \citep{Leluc2019}. 
Unfortunately, this will only be a feasible approach when integrating against simple probability distributions $\Pi$ for which polynomials can be exactly integrated, such as a Gaussian. 
CVs may also be directly available through problem-specific knowledge \citep[e.g., for certain Markov processes;][]{Newton1994,Henderson2002}, but this is rarely the case in general. 
Alternatively, CVs can sometimes be built using known properties of the method used for obtaining samples; see \cite{Andradottir1993,Hammer2008,Dellaportas2012,Brosse2018,Belomestny2019,Belomestny2019martingale} for CVs that are developed with a particular \gls{MCMC} method in mind. 
See also \cite{Hickernell2005} for CVs specialized to \gls{QMC}. 
An obvious drawback to the methods above is that they impose strong restrictions on the methods that one may use to obtain the $\{x_i\}_{i=1}^m$. 

An arguably more general framework, and our focus in this paper, is to first curate a rich set $\mathcal{G}$ of candidate CVs, and then to employ a learning procedure to approximately select an optimal CV $g \in \mathcal{G}$. This should be done according to a suitable optimality criterion based on $f$ and the given set $\{x_i\}_{i=1}^m$. 
The methodological challenges are therefore twofold; first, we must construct $\mathcal{G}$ and second, we must provide a procedure to select a suitable CV from this set. The construction of a candidate set $\mathcal{G}$ has been approached by several authors using a variety of regression-based techniques:

\begin{itemize}[leftmargin=*]
    \item Motivated by physical considerations, \cite{Assaraf1999} proposed to use $g = H u$, based on the Schr\"{o}dinger-type Hamiltonian
    $
    H = - 0.5 \Delta + 0.5( \sqrt{\pi})^{-1} \Delta \sqrt{\pi} ,
    $
    where $\Delta$ is the Laplacian and $u$ is a polynomial of fixed degree.
    See also \cite{Mira2013,Friel2014,Papamarkou2014}.
    \item An approach called \emph{control functionals} (CFs) was proposed in \cite{Oates2017}, where the set $\mathcal{G}$ consisted of functions of the form 
    $
    g = \nabla \cdot u + u \cdot \nabla \log \pi ,
    $
    where $\nabla \cdot$ denotes the divergence operator, $\nabla$ denotes the gradient operator and $u : \mathbb{R}^d \rightarrow \mathbb{R}^d$ is constrained to belong to a suitable Hilbert space of vector fields on $\mathbb{R}^d$.
    See also \cite{Barp2018,Oates2016CF2,South2020} for the connection with Stein's method.
    \item In more recent work, \cite{Zhu2018} extended the CF approach to the case where a NN is used to provide a parametric family of candidates for the vector field $u$. 
    The set of all such functions $g$ generated using a fixed architecture of NN is taken as $\mathcal{G}$.
    See also \cite{tucker2017rebar,Liu2018}.
\end{itemize}
Thus, several related options are available for constructing a suitable candidate set $\mathcal{G}$.
However, where existing literature diverges markedly is in the procedure used to select a suitable CV from this set:
\begin{itemize}[leftmargin=*]
    \item For approaches based on polynomials, \cite{Assaraf1999} proposed to select polynomial coefficients $\theta$ in order to minimize the sum-of-squares error $\sum_{i=1}^m (f(x_i) - g_\theta(x_i))^2$.
    Here $g_\theta$ is used to emphasize the dependence on coefficients $\theta$ of the polynomial.
    For even moderate degree polynomials, the combinatorial explosion in the number of coefficients as $d$ grows necessitates regularized estimation of $\theta$; suitable regularizers are evaluated in \cite{Portier2018,South2019}.
    \item For the CF approaches, regularized estimation is essential since the Hilbert space is infinite dimensional.
    Here, \cite{Oates2017} proposed to select $g$ as a minimal norm element of the Hilbert space for which the interpolation equations $f(x_i) = c + g(x_i)$ are satisfied for all $i = 1,\dots,m$ and some $c \in \mathbb{R}$.
    A major drawback of this approach is the $O(m^3)$ computational cost.
    \item The approach based on NN also exploited a sum-of-squares error, but in \cite{Zhu2018} the authors proposed to include an additional regularizer term $\lambda \sum_{i=1}^m g_\theta(x_i)^2$, for some pre-specified constant $\lambda$, to avoid over-fitting of the NN.
    Optimization over $\theta$, the parameters of the NN that enter into $g_\theta$, was performed using stochastic gradient descent.
\end{itemize}
It is therefore apparent that, in existing literature, the construction of the candidate set $\mathcal{G}$ is intimately tied to the approach used to select a suitable element from it.
This makes it difficult to draw meaningful conclusions about which CVs are most suitable for a given task; from a theoretical perspective, existing analyses make assumptions that are mutually incompatible and, from a practical perspective, the different techniques and software involved in implementing existing methods precludes a straightforward empirical comparison.
Our attention therefore turns next to the construction of a general framework that can be used to learn a wide range of CVs, including polynomial, kernel and NN, under a single set of theoretical assumptions and algorithmic parameters, enabling a systematic assessment of CV methods to be performed.

\section{Methods}
\label{sec:methodology}

Here we present a general framework for the construction of CVs:
In \Cref{subsec: Stein sec} the construction of a candidate set $\mathcal{G}$ is achieved using Stein operators, which unifies the CVs proposed in existing contributions such as \cite{Assaraf1999,Oates2017,Zhu2018} and covers simultaneously the case of polynomials, kernels and NNs.
Then, in \Cref{subsec: SGD section}, we present an approach to selection of a suitable element $g \in \mathcal{G}$, based on a variational formulation and performing stochastic optimization on an appropriate objective functional.

\subsection{Classes of Control Variates $\mathcal{G}$} \label{subsec: Stein sec}

The construction of non-trivial functions $g : \mathbb{R}^d \rightarrow \mathbb{R}$ with the property $\Pi[g] = 0$ is not straight-forward in the setting where MCMC would be used, since for general $f$ the integral $\Pi[f]$ cannot be exactly computed.
Stein's method \citep{Stein1972} offers a solution to this problem in the case where the gradient of $\log \pi$ can be evaluated pointwise, which we describe next. A \emph{Stein characterization} of a distribution $\Pi$ consists of a pair $(\mathcal{U},\mathcal{L})$, where $\mathcal{U}$ is a set of functions whose domain is $\mathbb{R}^d$ and $\mathcal{L}$ is an operator, such that $\Pi'\left[\mathcal{L}u\right] = 0$ $ \forall u \in \mathcal{U}$ if and only if the distributions $\Pi'$ and $\Pi$ are equal. In this case $\mathcal{U}$ is called a \emph{Stein class} and $\mathcal{L}$ is called a \emph{Stein operator}\footnote{To simplify presentation in the paper, we always assume $\mathcal{U}$ is a \emph{maximal} set of functions for which $\mathcal{L}u$ is well-defined and $\Pi[\mathcal{L}u] = 0$.}. 
Clearly, if one can identify a Stein characterization for $\Pi$, then one could take $\mathcal{G} = \mathcal{L} \mathcal{U} = \{ \mathcal{L}u : u \in \mathcal{U} \}$ as a set of candidates CVs. 

The literature on Stein's method provides general approaches to identify a Stein characterization \citep{Chen2011,Ross2011}.
In the \emph{generator approach}, $\mathcal{L}$ is taken to be the infinitesimal generator of a Markov process which is ergodic with respect to $\Pi$ \citep{Barbour1988}. 
For example, if $\mathcal{L}$ is the infinitesimal generator of an overdamped Langevin diffusion then one obtains the \emph{Langevin} Stein operator, which acts on vector fields $u$ on $\mathbb{R}^d$ as $\mathcal{L}_{\text{L}} u  = \nabla \log \pi \cdot u + \nabla \cdot u$.
This recovers the operator used in the \gls{CF} approach of \cite{Oates2017}, as well as the operator used in the NN approach of \cite{Zhu2018}. 
Alternatively, we could construct an operator that acts on \emph{scalar}-valued functions by replacing the vector field $u$ with the potential $\nabla u$ in the previous operator, leading to the scalar-valued Langevin (SL) Stein operator $\mathcal{L}_{\text{SL}} u = \Delta u + \nabla u \cdot \nabla \log \pi$. This recovers the operator used with polynomials in \cite{Assaraf1999,Mira2013}.
Trivially, a scalar multiple of a Stein operator is a Stein operator, and one may combine Stein characterizations $(\mathcal{U}_i,\mathcal{L}_i)$ linearly as $\mathcal{L}u = \mathcal{L}_1 u_1 + \mathcal{L}_2 u_2$, $u \in \mathcal{U}_1 \times \mathcal{U}_2$, so that considerable flexibility can be achieved. We will see in Section \ref{sec:experiments} that this can lead to scalable and flexible classes of CVs.

\subsection{Selection of a Control Variate $g \in \mathcal{G}$} \label{subsec: SGD section}

Once a set $\mathcal{G}$ of candidate CVs has been constructed, we must consider how to select a suitable element $g \in \mathcal{G}$ (or equivalently $u \in \mathcal{U}$) that leads to improved performance of the \gls{MC} estimator when $f$ is replaced by $f - g$. 
In general this will depend on the specific details of the \gls{MC} method; for example, in MCMC one would select $g$ to minimize asymptotic variance \citep{Dellaportas2012,Belomestny2019}, while in \gls{QMC} one would minimize the Hardy-Krause variation \citep{Hickernell2005}.
The situation simplifies considerably when  $\mathcal{G}$ contains an element $g^*$ such that $f - g^*$ is constant.
This optimal function $g^* = \mathcal{L} u^*$, if it exists, is given by the solution of \emph{Stein's equation}: $\mathcal{L}u^*(x)  =  f(x) - \Pi[f]$. 
This paper proposes to directly approximate a solution of this equation (a linear partial differential equation) by casting it in a variational form and solving over a subset $\mathcal{V} \subseteq \mathcal{U}$. The variational characterization that we use is that $J(u^*) = 0$, where 
\begin{talign*}
\textstyle J(u) := \left\|  f - \mathcal{L}u -\Pi[f] \right\|_{L^2(\Pi)}^2 = \text{Var}_\Pi [f-\mathcal{L}u] ,
\end{talign*}
with $\textstyle L^2(\Pi)$ being the space of
square-integrable functions with respect to $\Pi$. 
In the spirit of empirical risk minimization, we propose to minimize an empirical approximation of this functional, computed based on samples $(x_i)_{i=1}^m$ that are drawn either exactly or approximately from $\Pi$.
There are two natural approximations that could be considered.
The first is based on the \emph{variance} representation
\begin{align}
J(u) & = \text{Var}_\Pi [f-\mathcal{L}u] \approx J_m^{\text{V}}(u) \label{eq:estimated_variance_loss} \\ 
J_m^{\text{V}}(u) & := \textstyle \frac{2}{m(m-1)} \sum_{i>j} (f(x_i)-\mathcal{L}u(x_i) - f(x_j)+\mathcal{L}u(x_j))^2 , \nonumber
\end{align}
providing an approximation of $J$ at cost $O(m^2)$, used in  \cite{Belomestny2017}.
The second is based on the \emph{least-squares} representation
\begin{align}
J(u) & = \min_{c \in \mathbb{R}} \|f-\mathcal{L}u - c \|_{L^2(\Pi)}^2 \approx \textstyle \min_{c \in \mathbb{R}} J_m^{\text{LS}}(c,u),  \label{eq:estimated_ls_loss} \\
J_m^{\text{LS}}(c,u) & := \textstyle \frac{1}{m} \sum_{i=1}^m \left(f(x_i)-\mathcal{L}u(x_i) - c \right)^2 , \nonumber
\end{align}
providing an approximation of $J$ at cost $O(m)$, used in \cite{Assaraf1999,Mira2013,Oates2017,Oates2016CF2}. These approximations will be unbiased when the $x_i$ are independent draws from $\Pi$, but this will not necessarily hold for the MCMC case. To approximately solve this variational formulation we consider a parametric subset $\mathcal{V} \subseteq \mathcal{U}$, where elements of $\mathcal{V}$ can be written as $v_\theta$ for some parameter $\theta \in \mathbb{R}^p$.
Depending on the specific nature of the functions $g_\theta$, it can occur that the optimization problem is under-constrained, e.g., when $p > m$.
Therefore, following \cite{Oates2017,South2019,Zhu2018}, we also allow for the possibility of additional regularization at the level of $\theta$.
Thus we aim to minimize objectives of the form $\tilde{J}_m^{\text{V}}(\theta) + \lambda_m \Omega(\theta)$ and $\tilde{J}_m^{\text{LS}}(c,\theta) + \lambda_m\Omega(\theta)$ over $c \in \mathbb{R}$ and $\theta \in \mathbb{R}^p$, where $\tilde{J}_m^{\text{V}}(\theta) := J_m^{\text{V}}(v_\theta)$, $\tilde{J}_m^{\text{V}}(c,\theta) := J_m^{\text{V}}(c,v_\theta)$, $\lambda_m > 0$ and $\Omega(\theta)$ is a regularization term to be specified.
To reduce notational overhead, for the least-squares case we let $\theta_0 := c$ and simply write $\tilde{J}_m^{\text{LS}}(\theta)$ where $\theta \in \mathbb{R}^{p+1}$.

To perform the minimization, we propose to use \gls{SGD}. 
Thus, to minimize a functional $F(\theta)$, we iterate through $\theta^{(t+1)} = \theta^{(t)} - \alpha_t \widehat{ \nabla F } (\theta^{(t)})$, where the \emph{learning rate} $\alpha_t$ decreases as $t \rightarrow \infty$ and $\widehat{\nabla F}$ is an unbiased approximation to $\nabla F$.
In our experiments, $\widehat{\nabla F}$ is constructed using a randomly chosen subset from $(x_i)_{i=1}^m$, with this subset being re-sampled at each step of \gls{SGD} (\emph{i.e.}, mini-batch \gls{SGD}) \citep{polyak1992acceleration,zhang2004solving}. 

This framework is compatible with any parametric function class and has the potential to provide significant speed-ups, relative to existing methods, due to the efficiency of \gls{SGD}. 
For example, taking $\mathcal{V}$ to be the polynomials of degree at most $k$ in each variable recovers the same class as \cite{Assaraf1999,Mira2013,Papamarkou2014,South2019}, but with a parameter optimization strategy based on \gls{SGD} as opposed to exact least squares. This problem is hence closely related to the ADALINE algorithm of \cite{Widrow1960} with basis functions which integrate to zero.

For \gls{SGD} with $t$ iterations and mini-batches of size $b$, our computational cost will be of order $\mathcal{O}(d^k b t)$, whereas exact least squares must solve a linear system of size $O(d^k)$, leading to a cost of $\mathcal{O}(d^{3k}+md^k)$. Similarly, taking $\mathcal{V}$ to be a linear space spanned by $m$ translates of a kernel recovers the \gls{CF} method of \cite{Oates2017}. 
\gls{SGD} has computational cost of $\mathcal{O}(m d b t)$, whereas \gls{CF}s requires $\mathcal{O}(m^3+m^2d)$ due to the need to invert an $m$-dimensional matrix. 
\begin{figure*}[t!]
\begin{center}
\includegraphics[width= \textwidth]{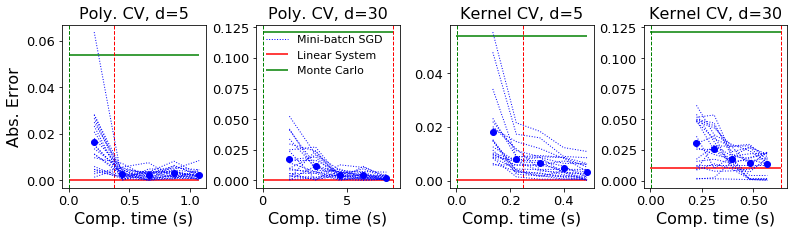}
\caption{\emph{Scalable Control Variates in High Dimensions.}
Here we consider the toy problem of integrating $f(x)=x_1+\dots+x_d$ against $\mathcal{N}(0,I_{d \times d})$. The total sample size is $n=1000$ and $m=500$ of these were used as the training set. 
Here $20$ realizations (blue dashed lines) are shown and blue dots represent the mean absolute error.
The red lines represent the performance and computational cost of solving the corresponding linear system exactly, our benchmark. 
Similarly, the green lines represent the \gls{MC} estimator with no \gls{CV} used.
}
\label{fig:polynomial_integrands}
\end{center}
\end{figure*}
Significant reduction in computational cost can also be obtained for ensembles: a combination of polynomial and kernel basis functions, as considered in \cite{South2020}, would cost $\mathcal{O}((md+d^k) bt)$ compared to the $\mathcal{O}(m^3+d^{3k}+m^2+md^k)$ cost when the linear system is exactly solved. Furthermore, any hyper-parameters, such as kernel parameters, can be incorporated into the minimization procedure with \gls{SGD}, so that nested computational loops are avoided.

Some of these speed-ups are illustrated on a toy example in  \Cref{fig:polynomial_integrands}. 
Even for this moderately-sized problem, the use of \gls{SGD} provides significant speed-ups. Additional experiments with values of $m=5000$ in Appendix \ref{appendix:polynomial_integrands} show that larger speed-ups can be obtained for large scale problems. 

\section{Theoretical Assessment}
\label{sec:theory}

In this section we present our novel theoretical results for CVs trained using SGD. All proofs are contained in Appendix \ref{appendix:proofs}.

The first question is whether it is possible to obtain \emph{zero-variance} CVs, i.e. can we find a $u \in \mathcal{U}$ such that $J(u)=\text{Var}_{\Pi}[f-\mathcal{L}u]=0$. The answer is ``yes'' under regularity conditions on $\Pi$ and $\mathcal{L}$, and whenever $\mathcal{V}$ is large enough. In particular, a fixed parametric class may not be large enough, but we can consider a nested sequence of sets $\mathcal{V}_1 \subseteq \mathcal{V}_2 \subseteq \ldots$ such that $\cup_{p \in \mathbb{N}} \mathcal{V}_p$ is dense in $\mathcal{U}$. For example, $\mathcal{V}_p$ could be polynomials of degree $p$, or NNs with $p$ hidden units. 
\begin{proposition} \label{thm:zero_variance}
Let $\mathcal{U}$ be a normed space and $\mathcal{L}:\mathcal{U} \rightarrow L^2(\Pi)$ be a bounded linear operator.
Consider a sequence of nested sets $\mathcal{V}_1 \subseteq \mathcal{V}_2 \subseteq \ldots$ such that $\cup_{p \in \mathbb{N}} \mathcal{V}_p$ is dense in $\mathcal{U}$.
If $\exists u \in \mathcal{U}$ that solves the Stein equation $\mathcal{L}u=f - \Pi[f]$, then $\lim_{p \rightarrow \infty} \inf_{v \in \mathcal{V}_p} J(v) = 0$.
\end{proposition}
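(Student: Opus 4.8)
The plan is to exploit the continuity of the map $u \mapsto J(u)$ on the normed space $\mathcal{U}$, which is an immediate consequence of the boundedness of $\mathcal{L}$, together with the density of $\cup_p \mathcal{V}_p$ in $\mathcal{U}$ and the monotonicity of the infima induced by the nesting $\mathcal{V}_1 \subseteq \mathcal{V}_2 \subseteq \cdots$.

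First I would let $u \in \mathcal{U}$ denote the assumed solution of the Stein equation $\mathcal{L}u = f - \Pi[f]$, and note that for any $v \in \mathcal{U}$,
\begin{talign*}
J(v) = \left\| f - \mathcal{L}v - \Pi[f] \right\|_{L^2(\Pi)}^2 = \left\| \mathcal{L}u - \mathcal{L}v \right\|_{L^2(\Pi)}^2 = \left\| \mathcal{L}(u-v) \right\|_{L^2(\Pi)}^2 \le \left\| \mathcal{L} \right\|^2 \left\| u - v \right\|_{\mathcal{U}}^2 ,
\end{talign*}
where $\|\mathcal{L}\|$ is the operator norm of $\mathcal{L}: \mathcal{U} \to L^2(\Pi)$, which is finite by hypothesis. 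Hence $J(v)$ can be made arbitrarily small by taking $v$ sufficiently close to $u$ in the norm of $\mathcal{U}$.

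Next, fix $\varepsilon > 0$. By density of $\cup_p \mathcal{V}_p$ in $\mathcal{U}$, there is an element $v_\varepsilon \in \cup_p \mathcal{V}_p$ with $\|u - v_\varepsilon\|_{\mathcal{U}} < \varepsilon$, and $v_\varepsilon \in \mathcal{V}_{p(\varepsilon)}$ for some index $p(\varepsilon)$. The display above then gives $\inf_{v \in \mathcal{V}_{p(\varepsilon)}} J(v) \le J(v_\varepsilon) \le \|\mathcal{L}\|^2 \varepsilon^2$. Finally, since the sets are nested, the sequence $a_p := \inf_{v \in \mathcal{V}_p} J(v)$ is non-increasing and bounded below by $0$, hence convergent; and because $a_{p(\varepsilon)} \le \|\mathcal{L}\|^2 \varepsilon^2$ for every $\varepsilon > 0$, its limit must equal $0$, which is the claim.

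This is a soft continuity-and-density argument and I do not expect a substantive obstacle. The only points requiring a little care are: (i) checking that $f - \Pi[f] \in L^2(\Pi)$ so that $J$ is finite and the rewriting $f - \mathcal{L}v - \Pi[f] = \mathcal{L}(u - v)$ is legitimate, which is automatic here since $\mathcal{L}u = f - \Pi[f]$ and $\mathcal{L}u \in L^2(\Pi)$ by the assumed codomain of $\mathcal{L}$; and (ii) recognising that obtaining the existence of the limit, rather than merely $\liminf_p a_p = 0$, relies on the monotonicity coming from the nesting, so that the density need only be invoked along the subsequence of indices $p(\varepsilon)$.
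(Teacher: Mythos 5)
Your argument is correct and is essentially identical to the paper's proof: both use the bound $J(v) = \|\mathcal{L}(u-v)\|_{L^2(\Pi)}^2 \leq \|\mathcal{L}\|_{\mathcal{U}\rightarrow L^2(\Pi)}^2 \|u-v\|_{\mathcal{U}}^2$, then combine density of $\cup_p \mathcal{V}_p$ with the monotonicity of $p \mapsto \inf_{v \in \mathcal{V}_p} J(v)$ coming from the nesting. No issues.
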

Of course, the existence of a solution to the Stein equation needs to be verified. 
This point has not yet, to the best of our knowledge, been addressed in the literature on CVs. 
Our next result below provides regularity conditions for the existence of a solution when using $\mathcal{L}_{\text{SL}}$, the Stein operator used in our experiments. Denote the Sobolev space $W^{k,p}(\Pi)$ of functions whose weak derivatives of order $k$ are in $L^p(\Pi)$ and the Sobolev space $W^{k,p}_{\text{loc}}$ of functions whose $p$-th power weak derivatives of order $k$ are locally integrable; these are formally defined in Appendix \ref{appendix:mathematical_background}. 
For a vector-valued function $h : \mathbb{R}^d \rightarrow \mathbb{R}^p$ we let $\textstyle \|h\|_{L^p(\Pi)} := ( \sum_{i=1}^d \|h_i\|_{L^p(\Pi)}^2 )^{1/2}$.
\begin{proposition} \label{thm:zero_variance_poisson} 
Consider the vector space $\mathcal{U} = W^{2,2}(\Pi)\cap W^{1,4}(\Pi)$ equipped with norm $\|u\|_{\mathcal{U}} := \max(\| u \|_{W^{1,4}(\Pi)},\| u \|_{W^{2,2}(\Pi)})$.  Then $\mathcal{L}_{\textsc{SL}}:\mathcal{U} \rightarrow L^2(\Pi)$ is a bounded linear operator with $\|\mathcal{L}_{\textsc{SL}}\|_{\mathcal{U} \rightarrow L^2(\Pi)} \leq 2 ( \|\nabla \log \pi\|^2_{L^4(\Pi)}+1 )^{\frac{1}{2}}$. 

Furthermore, suppose that
\begin{enumerate}
    \item[(i)] $\int \|x\|_2^{K} \mathrm{d}\Pi(x) < \infty$ for some $K > 8$,
    \item[(ii)] $(\nabla \log \pi)(x) \cdot ( x/\|x\|_2 ) \leq - r \|x\|_2^\alpha$ for some $\alpha > -1$, $r>0$, and all $\lVert x\rVert_2>M$ for some $M>0$,
    \item[(iii)] $|f(x)| \leq C_1 + C_2 \|x\|_2^\beta$ for some $C_1,C_2 \geq 1$ and $\beta < K/4 - 2$.
\end{enumerate}
Then, $\exists u \in \mathcal{U}$ that solves the Stein equation $\mathcal{L}_{\textsc{SL}} u = f - \Pi[f]$.
\end{proposition}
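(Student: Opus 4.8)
The plan is to prove the two assertions in turn; the operator bound is a short computation, while the existence claim rests on classical results for the Poisson equation of an elliptic diffusion.

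\emph{Operator bound.} By the triangle inequality $\|\mathcal{L}_{\textsc{SL}}u\|_{L^2(\Pi)} \le \|\Delta u\|_{L^2(\Pi)} + \|\nabla u \cdot \nabla\log\pi\|_{L^2(\Pi)}$. Writing $\Delta u = \sum_{i=1}^{d}\partial_{ii}u$ and using the triangle inequality over coordinates against the definition of $\|\cdot\|_{W^{2,2}(\Pi)}$ bounds the first term by $\|u\|_{W^{2,2}(\Pi)}$; a pointwise Cauchy--Schwarz inequality followed by Hölder with conjugate exponents $(2,2)$ bounds the second by $\|\nabla u\|_{L^4(\Pi)}\|\nabla\log\pi\|_{L^4(\Pi)} \le \|u\|_{W^{1,4}(\Pi)}\|\nabla\log\pi\|_{L^4(\Pi)}$. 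Using $\|u\|_{W^{2,2}(\Pi)}\le\|u\|_{\mathcal{U}}$, $\|u\|_{W^{1,4}(\Pi)}\le\|u\|_{\mathcal{U}}$ and the elementary inequality $1+a\le 2\sqrt{1+a^2}$ for $a\ge 0$ gives the stated constant; linearity is immediate.

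\emph{Existence.} The key identity is $\mathcal{L}_{\textsc{SL}}u = \pi^{-1}\nabla\cdot(\pi\nabla u)$, so $\mathcal{L}_{\textsc{SL}}$ is the $\Pi$-symmetric, non-positive generator of the overdamped Langevin diffusion $\mathrm{d}X_t = \nabla\log\pi(X_t)\,\mathrm{d}t + \sqrt{2}\,\mathrm{d}B_t$, and Stein's equation is exactly the Poisson equation $\mathcal{L}_{\textsc{SL}}u = h$ with centred data $h := f - \Pi[f]$; the candidate solution is $u = -\int_0^\infty P_t h\,\mathrm{d}t$, $P_t := e^{t\mathcal{L}_{\textsc{SL}}}$. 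I would proceed in three steps. (1) \emph{Drift.} For $V(x) = 1 + \|x\|_2^s$ (smoothed near the origin) one computes $\mathcal{L}_{\textsc{SL}}V(x) = s(s-2+d)\|x\|_2^{s-2} + s\|x\|_2^{s-2}\,x\cdot\nabla\log\pi(x)$, and (ii) bounds the last term by $-rs\|x\|_2^{s-1+\alpha}$ for $\|x\|_2>M$; since $\alpha>-1$ the exponent $s-1+\alpha$ strictly exceeds $s-2$, so $\mathcal{L}_{\textsc{SL}}V\le -c_s\|x\|_2^{s-1+\alpha}$ outside a compact set --- a Foster--Lyapunov drift condition. (2) \emph{Existence and growth.} Feeding this drift condition and the polynomial growth bound (iii) into the classical existence-and-regularity theory for Poisson equations of non-degenerate elliptic diffusions under a radial drift of exactly the form (ii) (in the spirit of Pardoux and Veretennikov), one obtains a solution $u$ that is $W^{2,p}_{\text{loc}}$ for every $p<\infty$ (interior elliptic regularity applies: the principal part of $\mathcal{L}_{\textsc{SL}}$ is $\Delta$ and $\nabla\log\pi$ is smooth) and satisfies polynomial bounds $|u(x)|\lesssim 1+\|x\|_2^{\beta+2}$, $\|\nabla u(x)\|\lesssim 1+\|x\|_2^{\beta+1}$, $\|\nabla^2 u(x)\|\lesssim 1+\|x\|_2^{\beta}$ in the worst case $\alpha\downarrow-1$ (sharper for larger $\alpha$). (3) \emph{Integrability.} Since $\beta<K/4-2$ we have $4(\beta+2)<K$, so by (i) $\|u\|_{L^4(\Pi)}<\infty$, hence $\|u\|_{L^2(\Pi)}<\infty$; likewise $4(\beta+1)<K$ and $2\beta<K$ give $\|\partial_i u\|_{L^4(\Pi)}<\infty$ and $\|\partial_{ij}u\|_{L^2(\Pi)}<\infty$. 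Therefore $u\in W^{2,2}(\Pi)\cap W^{1,4}(\Pi)=\mathcal{U}$. (The identity $\int\|\nabla u\|_2^2\,\mathrm{d}\Pi = -\int u\,(f-\Pi[f])\,\mathrm{d}\Pi$ delivers $\nabla u\in L^2(\Pi)$ almost for free, but not the $L^4$ gradient or $L^2$ Hessian membership.)

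\emph{Main obstacle.} The delicate part is Steps (2)--(3): obtaining quantitative polynomial growth bounds for $\nabla u$ and $\nabla^2 u$ and not merely for $u$, so as to land in $W^{1,4}$ and $W^{2,2}$ rather than only $W^{1,2}$ and $W^{2,2}_{\text{loc}}$, and matching the resulting exponents against the numerology $K>8$, $\beta<K/4-2$. When $\alpha\le 0$ the diffusion is only subgeometrically ergodic, so these derivative estimates do not follow from an exponential-contraction argument; they require either Bismut--Elworthy--Li gradient formulas for $P_t$ or a Lyapunov argument applied to the derivative process, and this is where the exponent $\beta+2$ --- equivalently the threshold $K/4-2$ --- is produced.
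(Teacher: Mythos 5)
Your operator bound is essentially the paper's argument: the paper squares first and uses $(a+b)^2\le 2(a^2+b^2)$ followed by H\"older with exponents $(2,2)$ on $\nabla\log\pi\cdot\nabla u$, whereas you use the triangle inequality and then $1+a\le 2\sqrt{1+a^2}$; both land on the same constant. For existence, you and the paper invoke the same source: the paper cites Theorem~1 of Pardoux and Veretennikov to obtain, under (ii) and (iii), a continuous solution $u\in W^{2,q}_{\text{loc}}$ for all $q>1$ with $|u(x)|+|\nabla u(x)|\le C_m(1+|x|^m)$ for every $m>\beta+2$, and then concludes $u\in W^{1,4}(\Pi)$ from the moment condition (i) by exactly the arithmetic in your step (3) ($4m<K$ for some $m>\beta+2$ because $\beta<K/4-2$). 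Your preliminary Foster--Lyapunov computation is a reasonable way to see why the hypotheses of that theorem hold, but the paper simply matches (ii)--(iii) to them directly.

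The genuine divergence --- and the gap in your proposal --- is the $W^{2,2}(\Pi)$ membership. You propose to establish a pointwise growth bound $\|\nabla^2 u(x)\|\lesssim 1+\|x\|_2^\beta$ and you correctly flag in your ``main obstacle'' paragraph that this does not follow from the cited theorem and would require Bismut--Elworthy--Li formulas or a Lyapunov argument for the derivative process; as written, that step is a plan rather than a proof. The paper avoids it entirely: since $u$ solves the equation (and is a classical solution by smoothness of $\pi$ and elliptic regularity), one has $\Delta u = f-\Pi[f]-\nabla\log\pi\cdot\nabla u$ pointwise, whence $\|\Delta u\|_{L^2(\Pi)}\le 2\|f\|_{L^2(\Pi)}+\|\nabla\log\pi\|_{L^4(\Pi)}\|u\|_{W^{1,4}(\Pi)}<\infty$ by H\"older, using only the already-established $W^{1,4}(\Pi)$ membership and assumption (iii) to put $f$ in $L^2(\Pi)$. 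No pointwise Hessian estimate is needed, so the obstacle you identify dissolves. (In fairness to your instinct, this bootstrapping controls only the Laplacian rather than all second weak derivatives, so the paper's conclusion $u\in W^{2,2}(\Pi)$ in the sense of its Appendix~B definition is itself slightly glossed; but the intended argument is to read the second-order term off the equation, not to estimate the Hessian of the semigroup representation.) Replacing your Hessian-growth step with this one-line consequence of the equation would close the gap.
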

The fact that the space $\mathcal{U}$ in \Cref{thm:zero_variance_poisson} is separable ensures that suitable approximating sets $\mathcal{V}_p$ can be constructed.
For example, if $\{u_i\}_{i=1}^\infty$ is a spanning set for $\mathcal{U}$ then we may set $\mathcal{V}_p = \text{span}(u_1,\dots,u_p)$, in which case $\cup_{p \in \mathbb{N}} \mathcal{V}_p$ is dense in $\mathcal{U}$ so the result of  \Cref{thm:zero_variance_poisson} holds.

Notice that a solution to the Stein equation will not be unique, since one can introduce an additive constant.
This motivates, in practice, the use of an additional regularizer $\Omega(\theta)$ to ensure uniqueness of the minimum of $\theta \mapsto J(v_\theta)$. 

In \Cref{app: convex opt} of the Electronic Supplement we also recall a standard convergence result for SGD in settings where the objective is convex, focusing on the case where $\mathcal{G}$ is a finite dimensional linear space. This result is thus applicable to polynomials and kernels, but not NN-based CVs.

\section{Empirical Assessment}
\label{sec:experiments}

Here we assess our method on both synthetic problems and on problems arising in a Bayesian statistical context.
Our aim is twofold; (i) to assess whether our learning procedure provides a speed-up compared to existing approaches, and (ii) to gain insight into which class of CV may be most appropriate for a given context.
The Stein operator $\mathcal{L}_{\text{SL}}$ was used for all experiments. 
For the polynomial and kernel CVs, the regularizer $\Omega(\theta) = \|\theta\|_2^2$ was used, while for NN CVs the regularizer $\Omega(\theta) = \sum_{i=1}^{m}g_{\theta}(x_i)^2$ was used, following \cite{Zhu2018}. 
The regularization strength
parameter $\lambda$ was tuned by cross-validation.
For some datasets, we employed two ensemble CVs: a sum of kernel and a polynomial (i.e., kernel + polynomial); and a sum including two kernels with different hyperparameters and a polynomial (i.e., multiple kernels + polynomial). 
Implementation details and further experiments are provided in \Cref{appendix:additional_experiments} of the Electronic Supplement.

\paragraph{Genz Test Functions:}

The Genz functions are a standard benchmark used to evaluate a numerical integration method \citep{Genz1984}. 
These functions $f$ exhibit discontinuities and sharp peaks, but nevertheless they can be exactly integrated. 
The purpose of this first experiment is simply to assess whether \emph{any} variance reduction can be achieved using our general framework in challenging and pathological situations.\footnote{We emphasize that MC can be evaluated at negligible cost and we are not advocating that our methods should be preferred for this task.}
Results are shown in Table \ref{table:Genz_functions_compare_cv} for polynomial-based and kernel-based CVs, as well as an ensemble of both. The CVs are trained using SGD on the least-squares objective functional with batch size $b=8$ for 25 epochs. 
For each $f$, the mean absolute error (MAE) of polynomial CVs is always the largest while the linear combination of kernel and polynomial consistently performs the best. This is likely due to the increased flexibility of the CV. 
In all cases a substantial reduction in MAE was achieved, compared to MC.
Full details and an extensive range of additional experiments are provided in \Cref{appendix:Genz} of the Electronic Supplement.

\begin{table*}[t]
\begin{center} \small
\begin{tabular}{|c|cccc|}
\hline
  \textbf{Integrand } $f$ & \textbf{MC} & \textbf{Poly. CV}  & \textbf{Ker. CV}  & \textbf{Poly.+Ker. CV}  \\ 
  \hline \hline
  Continuous & 2.77e-03& 3.21e-03 & $3.28$e-$04$&  $\bm{1.85}$e-$\bm{04}$ \\
  Corner Peak & 5.76e-03 & 1.07e-03& ${9.27}$e-$06$& $\bm{6.05}$e-$\bm{06}$ \\
  Discontinuous&  2.04e-02& 1.32e-02& ${3.91}$e-${03}$& $\bm{2.65}$e-$\bm{03}$ \\
  Gaussian Peak&   1.47e-03& 1.40e-03& ${1.24}$e-${05}$& $\bm{1.05}$e-$\bm{05}$ \\
  Oscillatory&  4.17e-03& 1.06e-03& $4.63$e-${06}$& $\bm{3.90}$e-$\bm{06}$ \\
  Product Peak&  1.37e-03& 1.32e-03& ${2.12}$e-${05}$& $\bm{2.52}$e-$\bm{06}$ \\
  \hline \hline
  Time (sec.) & 7.10e-02 & 4.30e+00 & 2.60e+00 & 5.70e+00  \\
  \hline
 \end{tabular}
 \vspace{2mm}
 \caption{\emph{Mean absolute error (based on 20 repetitions) for polynomial-based CV, kernel-based CV 
 and an ensemble of these, for the Genz benchmark \citep{Genz1984}}. We took $n=1000, m=500$ and $d=1$. The training time presented is for 25 epochs, averaged over repetitions for all integrands.
 }
\label{table:Genz_functions_compare_cv}
\end{center}
\end{table*}

\paragraph{Integrating Gaussian Processes:}
To automatically generate test problems, we modelled $f$ as a Gaussian process (GP) and sampled $(\Pi[f],f(x_1), \dots, f(x_n))$ from its Gaussian marginal; here the GP was centred and a squared-exponential covariance function was used, and the distribution $\Pi$ was taken to be an $L$-component Gaussian mixture model. 
In this way infinitely many problem instances can be generated, of a similar nature to those arising in computer experiments \citep{Kennedy2001} and Bayesian numerical methods \citep{OHagan1991,Briol2019PI}. 
We compared CVs based on polynomials, 
kernels, and NNs (three-layer ResNet with ReLU activation with $50$ neurons per layer). 

\begin{figure*}[t]
\begin{center}
\includegraphics[width=\textwidth]{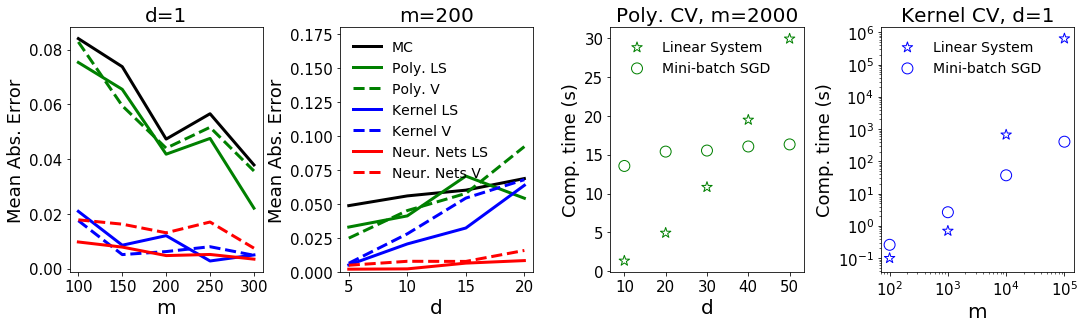}
\caption{\emph{Integrating Gaussian Processes.}
\textit{Left and centre-left:} The mean absolute error (based on 20 repetitions) of the CV estimators as a function of the training set size $m$ and dimension $d$.
\textit{Centre-right and right:} Compute times for polynomial and kernel CVs as a function of $m$ and $d$.
}
\label{fig:GPrealisations_experiments}
\end{center}
\end{figure*}

Results are presented in Fig. \ref{fig:GPrealisations_experiments}, with implementational details in \Cref{appendix:GP_realisations} of the Electronic Supplement.  
The left-most panel presents the performance of each CV for minimising either $\tilde{J}^{\text{V}}_m$ or $\tilde{J}^{\text{LS}}_m$ in $d=1$. 
Polynomials are not flexible enough for such complex integrands, but kernels and NNs can achieve substantial reduction in error.
However, we found that the ``effective'' time requires to implement a NN, including initialization of SGD and selecting an appropriate learning rate, meant that NN were not time-competitive with the other methods considered.
The center-left panel studies the impact of $d$ on the performance of each method.  The performance of polynomial and kernel CVs degrades rapidly with $d$, but this is not the case for NNs. 
In both panels, $\tilde{J}_m^{\text{LS}}$ leads to improved results compared to $\tilde{J}_m^{\text{V}}$.
The centre-right and right panels report computational times of
linear system and mini-batch SGD
as $d$ and $m$ grows. 
These two panels verify that mini-batch SGD has linear time complexity as $n$ or $d$ is increased, whist exact solution of linear systems leads to exponential computational costs for polynomial and kernel CVs.

\paragraph{Parameter Inference for Ordinary Differential Equations:}
Here we consider the problem of inference for parameters $\alpha,\beta,\gamma,\delta$ of the Lotka–Volterra equations $\dot{x}  = \alpha x -\beta xy$,  $\dot{y}  = \delta xy - \gamma y$, a popular ecological model for competing populations \citep{lotka1925principles,volterra1926fluctuations}.
Our experimental set up is identical to that used in \cite{Riabiz2020}.
Our task is to compute posterior means of these dynamic parameters based on datasets of size $n$ arising as a subsample from Metropolis-adjusted Langevin algorithm output \citep{roberts1996}; the full MCMC output provided the ground truth.
Half of the sample was used to
train CVs ($m=\frac{n}{2}$) and a batch size of $b=8$ was used over $25$ epochs in SGD based on the least-squares objective functional.

\begin{figure*}[t!]
\begin{center}
\includegraphics[width=\textwidth]{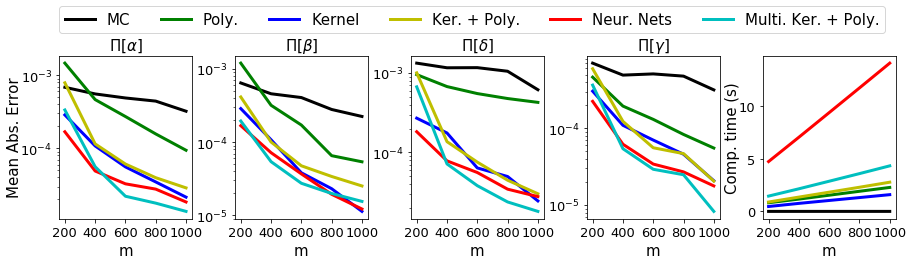}
\caption{\emph{Parameter Inference for Ordinary Differential Equations.} 
Each panel except the rightmost presents the mean absolute error (based on 20 repetitions) for approximation of posterior expectations of model parameters using MCMC output. The rightmost panel presents the computing time of training these CVs.
Here ``MC'' represents the benchmark where no CV is used.
}
\label{fig:pde_results}
\end{center}
\end{figure*}

Fig. \ref{fig:pde_results} displays the performance of different CVs under sizes of training dataset. 
In each case the standard MC estimate is outperformed, with ensemble of multiple kernels with a polynomial or the NN performing uniformly best. Due to the computational cost of training NNs as shown in the rightmost panel, we found the ensemble to be preferable. The ensemble also leads to a convex objective which is easier to minimize.

\paragraph{High-dimensional Bayesian Logistic Regression}
In this final example, we consider Bayesian logistic regression. We experimented on two different datasets: the Sonar data and the Madelon data. The Sonar dataset has dimension $d=61$, which is lower than the $d=500$ of the Madelon dataset. Results were similar for both experiments, and the Sonar data is therefore relegated to \Cref{appendix:sonar_data} of the Electronic Supplement. 

The Madelon data is an artificial dataset, which was part of the NIPS/NeurIPS 2003 feature selection challenge \citep{Guyon2003,dheeru2017uci}. This is a two-class classification problem with 500 continuous input variables. We denote by $\beta$ the weight vector that includes all parameters to infer in the Bayesian logistic regression.
MCMC was used to sample from the posterior of $\beta$ with the Python interface to Stan \citep{Carpenter2016}.
Our task is to approximate the posterior probability that an unlabeled data point $z$ corresponds to label 1, rather than 0, based on a subset of size $m$ from the MCMC output. Thus $f(\beta)=(1+\exp(-z^\top \beta))^{-1}$. The entire chain was used to establish ``ground truth'' for the value of this integral.

\begin{figure}[t!]
\centering
\includegraphics[width=.6\textwidth,trim={1mm 0 0mm 0},clip]{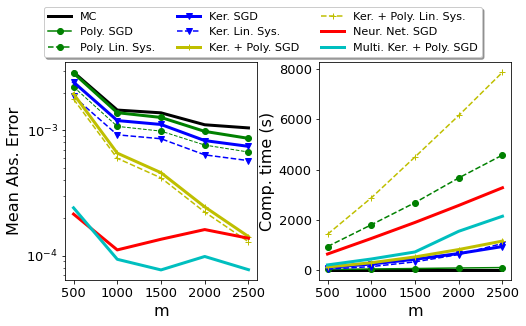}
\caption{\emph{Madelon Dataset}. 
The mean absolute error (left) and compute times (right), as a function of the size $m$ of the training set; based on 20 repetitions. 
}
\label{fig:madelon_logistic_results}
\end{figure}

In these experiments, $J^{\text{LS}}_m$ was used with $m=n$ and batch sizes of $b = 8$ over 25 epochs of SGD. 
Fig. \ref{fig:madelon_logistic_results} compares the performance of different CV methods. The two ensemble CVs and the NNs perform significantly better than other CVs. 
When $m < 1000$, the NNs and the CV with multiple kernels and a polynomial have similar performance, better than others. When $m \geq 1000$, the ensemble
CV surpasses NNs. One possible explanation is that for all values of $m$ we used the same multi-layer perceptron (MLP) with
6 layers and 20 nodes in each of them.
Therefore, the NNs size (capacity) remains the same while the training data size $m$ increases. Further growing the depth of NN could lead to an improved performance.
Furthermore, the results for polynomials and kernels demonstrate that our general framework based on SGD can achieve comparable MAE with exactly solving the linear systems, but with a fraction of the associated computational overhead.
The compute time of NN in Fig. \ref{fig:madelon_logistic_results} does not capture the time required to manually calibrate SGD, so that the ``effective'' compute time is much higher than reported.

\section{Conclusion}
\label{sec:conclusion}

This paper outlined a general framework for developing CVs using Stein operators and SGD. 
It was demonstrated that (i) the proposed training scheme leads to speed-ups compared to existing CV methods; (ii) novel CV methods (e.g., ensemble methods) can be easily developed; (iii) theoretical analysis can be performed in quite a general setting that simultaneously encompasses multiple CV methods. 
Further research could explore the use of other Stein classes and operators. 
In terms of Stein classes, one could consider the use of wavelets, which are known for their good performance for multi-scale function approximation, or other NN architectures which could provide further gains in high dimensions. 
Stein operators are not unique and one could explore parameterized operators \citep{Ley2016Parametric} and include these parameters in the optimization scheme. 
Finally, one could construct novel CVs on other spaces, such as general smooth manifolds or countable spaces \citep{Barp2018}. 
\subsubsection*{Acknowledgements}
The authors would like to thank Charline Le Lan for helpful discussions, and Wilson Chen, Marina Riabiz and Leah South for sharing MCMC samples from the model of atmospheric pollutants, the predator-prey model and the logistic regression model respectively. 
CJO, ABD, FXB were also supported by the Lloyd's Register Foundation Programme on Data-Centric Engineering and the Alan Turing Institute under the EPSRC grant [EP/N510129/1]. FXB was supported by an Amazon Research Award on ``Transfer Learning for Numerical Integration in Expensive Machine Learning Systems''.

\appendix

\section{Proofs of Theoretical Results} \label{appendix:proofs}

\subsection{Some Elements from Functional Analysis} \label{appendix:mathematical_background}

Let $X$ and $Y$ be two normed real vector spaces.
A function $f:X \rightarrow Y$ is called \emph{Lipschitz} continuous if there exists a constant $L$ such that, $\forall x,x' \in X$: $\|f(x)-f(x')\|_Y \leq L \|x-x'\|_X$. The smallest such $L \geq 0$ is called the \emph{Lipschitz constant} of $f$.
The norm of a bounded linear operator $\mathcal{L}:X \rightarrow Y$ is given by: $
\|\mathcal{L}\|_{X \rightarrow Y} := \inf \left\{ c \geq 0 : \|\mathcal{L} x\| \leq c \|y\| \; \forall x \in X  \right\}$.
For $1\leq p < \infty$ we denote 
\begin{talign*}
{L^p(\Pi)} & := \left\{ f:\mathbb{R}^d \rightarrow \mathbb{R} \text{ measurable } \Big| \|f\|_{L^p(\Pi)} := \left(\int_{\mathbb{R}^d} |f(x)|^p \Pi(\mathrm{d}x)\right)^{\frac{1}{p}} < \infty \right\}.\\
{L^p_{\text{loc}}} & := \left\{ f:\mathbb{R}^d \rightarrow \mathbb{R} \text{ measurable } \Big| \left(\int_{K} |f(x)|^p \mathrm{d}x\right)^{\frac{1}{p}} < \infty,  \; \forall \text{compact } K \subset \mathbb{R}^d \right\}.
\end{talign*}
As usual, $L^p(\Pi)$ can be interpreted as a normed space via identification of functions that agree $\Pi$-almost everywhere on $\mathbb{R}^d$.
Using this definition, we can now also define weighted Sobolev spaces of integer smoothness:
\begin{talign*}
W^{k,p}(\Pi) & := \left\{ f \in L^p(\Pi) \Big| D^\alpha f \in L^p(\Pi) \; \forall |\alpha| \leq k \right\}\\
W_{\text{loc}}^{k,p} & := \left\{ f \in L_{\text{loc}}^p \Big| D^\alpha f \in L_{\text{loc}}^p \; \forall |\alpha| \leq k \right\}
\end{talign*}
In this definition, $\alpha=(\alpha_1,\ldots,\alpha_d) \in \mathbb{N}^d_0$ is a multi-index and $D^\alpha$ denotes the weak derivative of order $\alpha$, \emph{i.e.} $D^\alpha f := \partial^{|\alpha|}f / \partial x_1^{\alpha_1} \ldots \partial x_d^{\alpha_d}$.
Recall that $W^{k,p}(\Pi)$ can be interpreted as a normed space with norm $
\| u \|_{W^{k,p}(\Pi)} := (\sum_{i=0}^k \sum_{\alpha: |\alpha|=i} \int | D^{\alpha} u(x) |^p \mathrm{d}\Pi(x))^{\frac{1}{p}}$,
again via identification of functions whose derivatives up to order $|\alpha| \leq k$ agree $\Pi$-almost everywhere on $\mathbb{R}^d$.

\subsection{Proof of  \Cref{thm:zero_variance}}

\begin{proof}
Let $u \in \mathcal{U}$ solve the Stein equation $\mathcal{L}u = f - \Pi[f]$.
Since $\mathcal{L}$ is a bounded linear operator between normed spaces,
\begin{talign*}
 J(v) 
=  \left\|f - \Pi[f] - \mathcal{L}v\right\|^2_{L^2(\Pi)} 
=  \left\|\mathcal{L}u - \mathcal{L}v\right\|^2_{L^2(\Pi)} 
\leq \|\mathcal{L}\|_{\mathcal{U} \rightarrow L^2(\Pi)}^2 \|u-v\|^2_{\mathcal{U}}
\end{talign*}
where $\|\mathcal{L}\|_{\mathcal{U}\rightarrow\mathcal{L}^2(\Pi)} < \infty$.
Fix $\epsilon > 0$.
Since $u \in \mathcal{U}$ and $\cup_{p \in \mathbb{N}} \mathcal{V}_p$ is dense in $\mathcal{U}$, there exists $v \in \cup_{p \in \mathbb{N}} \mathcal{V}_p$ such that $\|u - v \|_{\mathcal{U}} < \epsilon$.
In particular, there exists $q \in \mathbb{N}$ such that $v \in \mathcal{V}_q$.
Moreover, since $\mathcal{V}_q \subseteq \mathcal{V}_p$ for all $q \leq p$, the function $p \mapsto \inf_{v \in \mathcal{V}_p} J(v)$ is non-increasing.
Thus
\begin{align*}
0 \leq \lim_{p \rightarrow \infty} \inf_{v \in \mathcal{V}_p} J(v) \leq \inf_{v \in \mathcal{V}_q} J(v) & \leq \|\mathcal{L}\|_{\mathcal{U} \rightarrow L^2(\Pi)}^2 \inf_{v \in \mathcal{V}_q} \|u-v\|_{\mathcal{U}}^2 \leq \|\mathcal{L}\|_{\mathcal{U} \rightarrow L^2(\Pi)}^2 \epsilon^2 .
\end{align*}
Since $\epsilon > 0$ was arbitrary, the right hand side can be made arbitrarily small.
\end{proof}

\subsection{Proof of  \Cref{thm:zero_variance_poisson}}

\begin{proof}
First we will show that $\mathcal{L}_{\textsc{SL}}$ is a bounded linear operator from $\mathcal{U} = W^{2,2}(\Pi)\cap W^{1,4}(\Pi)$ to $L^2(\Pi)$.
To this end:
\begin{talign}
\left\|\mathcal{L}_{\textsc{SL}}u- \mathcal{L}_{\textsc{SL}}v\right\|^2_{L^2(\Pi)}
& = \left\|\nabla \log \pi \cdot \nabla (u- v) + \nabla \cdot \nabla (u- v)\right\|^2_{L^2(\Pi)} \label{eq:poissonproof_0} \\
&
\hspace{-70pt} \leq 2  \left[ \left\|\nabla \log \pi \cdot \nabla (u- v)\right\|^2_{L^2(\Pi)} + \left\|\nabla \cdot \nabla (u- v)\right\|^2_{L^2(\Pi)}\right] \label{eq:poissonproof_1} \\
& 
\hspace{-70pt} \leq 2 \left[ \left\| \nabla \log \pi \right\|^2_{L^4(\Pi)}\left\|\nabla( u- v ) \right\|^2_{L^4(\Pi)} + \left\|u- v\right\|^2_{W^{2,2}(\Pi)}\right] \label{eq:poissonproof_2} \\
&
\hspace{-70pt} \leq 2 \left(\left\| \nabla \log \pi \right\|^2_{L^4(\Pi)} + 1 \right) \left( \|u-v\|^2_{W^{1,4}(\Pi)} + \|u-v\|^2_{W^{2,2}(\Pi)} \right), \label{eq:poissonproof_3} \\
& 
\hspace{-70pt} \leq  4 \left(\left\| \nabla \log \pi \right\|^2_{L^4(\Pi)} + 1 \right) \max\left( \|u-v\|_{W^{1,4}(\Pi)} , \|u-v\|_{W^{2,2}(\Pi)} \right)^2 \nonumber
\end{talign}
\Cref{eq:poissonproof_0} follows by definition of the Stein operator, Eq. \ref{eq:poissonproof_1} follows from the fact that $(a+b)^2 \leq 2(a^2+b^2)$. 
\Cref{eq:poissonproof_2} follows from the vector-valued H\"older inequality together with the definition of $\|\cdot\|_{W^{2,2}(\Pi)}$.
\Cref{eq:poissonproof_3} follows from the definition of $\|\cdot\|_{W^{1,4}(\Pi)}$.
Thus $\mathcal{L}_{\textsc{SL}}$ is a bounded linear operator as claimed, and moreover $\|\mathcal{L}_{\text{SL}}\|_{\mathcal{U} \rightarrow L^2(\Pi)} \leq 2 (\|\nabla \log \pi \|_{L^4(\Pi)}^2 + 1)^{\frac{1}{2}}$.

The second task is to establish that there exists a solution to the Stein equation $\mathcal{L}_{\text{SL}} u = f - \Pi[f]$. For this we leverage \citet[Theorem 1]{Pardoux2001} which states that, if conditions (ii), (iii) hold, there exists a solution $u$ to the Stein equation which is continuous and belongs to $W_{\text{loc}}^{2,q}$ for all $q> 1$.  Moreover, $\forall m > \beta + 2$ there exists $C_m$ such that $|u(x)| + |\nabla u(x)| \leq C_m(1 + |x|^m)$ for all $x \in \mathbb{R}^d$. By assumption (i) it follows that $u \in W^{1,4}(\Pi)$.  Moreover, since $\pi$ was assumed to be smooth (recall, this was assumed at the outset in \Cref{sec:introduction}), standard regularity results imply that $u$ is smooth and so, is a classical solution.  We can therefore write
\begin{talign*}
|\Delta u(x) | \leq |f(x)| + |\Pi(f)| + |\nabla \log \pi(x)\cdot \nabla u(x)|, \quad x \in \mathbb{R}^d,
\end{talign*}
so that $\left\| \Delta u \right\|_{L^2(\Pi)} \leq 2\lVert f \rVert_{L^2(\Pi)} + \lVert\nabla \log \pi\rVert_{L^4(\Pi)}\lVert u \rVert_{W^{1,4}(\Pi)} < \infty$.  It follows that $u \in W^{2,2}(\Pi)\cap W^{1,4}(\Pi)$, as claimed.

\end{proof}

\bibliography{reference,bibliography}
\newpage

\begin{center}
\vspace{0.3cm}
\LARGE \textbf{Electronic Supplement} 
\end{center}

The following document supplements the paper \emph{\papertitle}. 
In \Cref{appendix:existing_methodology}, we review existing methodology for \gls{CV}s based on polynomials and kernels. 
\Cref{app: convex opt} discusses stochastic convex optimization in from a theoretical standpoint.
Finally, \Cref{appendix:additional_experiments} contains a detailed exposition of the experimental setup in the paper for reproducibility, and provides additional results. 

\section{Additional Background on Control Variates}\label{appendix:existing_methodology}

Let $\{x_i\}_{i=1}^n$ be a set containing approximate samples from $\Pi$.
The classic approach to \gls{CV}s is based on data-splitting, such that a \gls{CV} $g$ is constructed based on a subset of the samples $\{x_i\}_{i=1}^m$, then a \gls{MC} estimator based on $f-g$ is evaluated using the remainder of the samples, $\{x_i\}_{i=m+1}^n$.
Thus $\Pi[f]$ is approximated using the \gls{CV} estimator
\begin{talign*}
\frac{1}{(n-m)} \sum_{i=m+1}^n (f(x_i)-g(x_i))
\end{talign*}
where $g(\cdot) = g(\cdot; x_1,\dots,x_m)$.
If the $x_i$ are independent samples from $\Pi$ then such a \gls{CV} estimator is unbiased.
It is also common practice to use the same set $\{x_i\}_{i=1}^n$ for both the construction of $g$ and evaluation of the \gls{MC} estimator; in this case the estimator is biased in general but may enjoy superior mean square error.

In this section we recall existing approaches to constructing \gls{CV}s, providing references to existing literature where appropriate.

\subsection{Control Variates based on Polynomials}\label{appendix:poly}

As pointed out in the main text, the polynomial \gls{CV}s of \cite{Assaraf1999,Mira2013,Papamarkou2014,South2019} are based on $\mathcal{L}_{\text{SL}}$ and take the form:
\begin{talign*}
g_\theta(x) = \mathcal{L}_{\text{SL}} v_\theta(x) = \Delta_x v_\theta(x) + \nabla_x v_\theta(x) \cdot \nabla_{x}\log \pi(x),
\end{talign*}
where $v_\theta(x)$ is a polynomial of order $k\in \mathbb{N}$. For first order polynomials (i.e. $k=1$ and $p=d$), we have $v_\theta(x) = \sum_{i=1}^d \theta_i x_i$ where $\theta = (\theta_1,\ldots,\theta_d) \in \mathbb{R}^d$, and the \gls{CV} estimator is of the form: $g_\theta(x) =  \theta \cdot \nabla_x \log \pi(x)$.
Note that the constant term is not included, since this is in the null space of $\mathcal{L}_{\text{SL}}$.
More generally, for an arbitrary polynomial of order $k$,
 \begin{talign*}
 v_\theta(x) = \sum_{j=1}^p \theta_j x^{\alpha_{j1}}_1 \cdots x^{\alpha_{jd}}_d,
 \end{talign*}
 for some $\theta = (\theta_1,\ldots,\theta_p) \in \mathbb{R}^p$ and where the rows of the matrix $\alpha \in \mathbb{Z}^{p \times d}$ are multi-indices containing polynomial coefficients such that the polynomial has total degree $k\geq 1$: $1 \leq \sum_{l=1}^d \alpha_{jl} \leq p$. The total number of polynomials satisfying this condition is $p = {{d+k}\choose{d}}-1$. 
The \gls{CV}s based on such polynomials take the form $g_{\theta}(x) = \theta \cdot b(x)$, where the vector $b(x) = (b_1(x),\ldots,b_p(x))$ has components:
\begin{talign*}
b_j(x) & = \left[ \sum_{l=1}^d \max\left(0,\alpha_{jl}\right) x_l^{\alpha_{jl}-1} \frac{\partial \log \pi}{\partial x_l} \prod_{z=1,z \neq l}^d x_z^{\alpha_{jz}}\right] \\
& \qquad +  \left[\max\left(0,\alpha_{jl}(\alpha_{jl}-1)\right) x_l^{\alpha_{jl}-2} \prod_{z=1,z\neq l}^d x_z^{\alpha_{jz}}\right]
\end{talign*}
for $j=1,\ldots,p$; see Appendix A of \cite{South2019}. 
The value of $\theta$ which minimizes the least-squares objective $\hat{J}^{\text{LS}}_m$ is given by $\theta^*_m = \hat{V}_m^{-1} \hat{C}_m$ with: 
\begin{talign*}
\hat{V}_m & = \frac{1}{(m-1)} \sum_{i=1}^m \left(b(x_i) -  \frac{1}{m}\sum_{i=1}^m b(x_i) \right) \left(b(x_i) -  \frac{1}{m}\sum_{i=1}^m b(x_i)\right)^\top,  \\
\hat{C}_m & =  \frac{1}{(m-1)} \sum_{i=1}^m \left(f(x_i) - \frac{1}{m}\sum_{i=1}^m f(x_i)\right) \left(b(x_i) - \frac{1}{m}\sum_{i=1}^m b(x_i) \right)^\top .
\end{talign*}
The size $m$ of the training dataset is required to be sufficiently large to ensure that the matrix $\hat{V}_m$ is non-singular, otherwise additional regularisation is required \citep{South2019}.
Exact solution of this linear system for $\theta_m^*$ requires a computational cost of $O(p^3)$, which can be prohibitive since $p$ increases rapidly with both $d$ and $k$.

\subsection{Control Functionals: Control Variates based on Reproducing Kernels}
\label{appendix:kernel}

\gls{CF}s are \gls{CV}s constructed using a nonparametric kernel-based interpolant. Let $k:\mathbb{R}^d\times \mathbb{R}^d \rightarrow \mathbb{R}$ be a symmetric positive definite kernel with corresponding reproducing kernel Hilbert space $\mathcal{H}_k$.  \cite{Oates2017} noted that, under some regularity conditions, the kernels 
\begin{talign}
k_0(x,y)  & :=   \nabla_x \cdot \nabla_y k(x,y) + \nabla_x k(x,y) \cdot \nabla_y \log \pi(y) \nonumber \\ 
& \qquad + \nabla_y k(x,y) \cdot \nabla_x \log \pi(x) + k(x,y) \nabla_x \log \pi(x) \cdot \nabla_y \log \pi(y), \label{eq:kernel0}
\end{talign}
and $k_{+}(x,y) := k_0(x,y)+\sigma^2$ for $\sigma>0$ are also reproducing kernels with corresponding RKHS respectively denoted $\mathcal{H}_{k_0}$ and $\mathcal{H}_{k_+}$. More specifically, $\mathcal{H}_{k_+}$ is just $\mathcal{H}_{k_0}$ with the addition of constant functions. The RKHS $\mathcal{H}_{k_+}$ can be used to approximate the integrand $f$ as follows:
\begin{talign*}
\tilde{f}_{\sigma} & \in \argmin \left\{ \|h\|_{\mathcal{H}_+} \text{ s.t. } h \in \mathcal{H}_+, \; h(x_i) = f(x_i), \; i = 1,\dots,m \right\} .
\end{talign*}
Under regularity conditions this provides a unique approximation of the form (see e.g. Proposition 1 in \cite{Briol2019PI}):
\begin{talign*}
\tilde{f}_{\sigma}(x) = k_+(x,X) k_+(X,X)^{-1}f(X),
\end{talign*}
where we have used the matrix notation $[k_+(x,X)]_i = k_+(x,x_i)$, $[f(X)]_i = f(x_i)$ and $[k_+(X,X)]_{i,j} = k_+(x_i,x_j)$ for $i,j \in \{1,\ldots,m\}$. 
The integral of this approximation can be obtained in closed form: 
\begin{talign*}
\Pi[\tilde{f}_{\sigma}] & = \sigma^2 \bm{1}^\top k_+(X,X)^{-1}f(X),
\end{talign*}
where $\bm{1}$ is the vector $(1,\dots,1)^\top$. Finally, the control functional is therefore given by $g_{\sigma}(x) = \tilde{f}_{\sigma}(x) - \Pi[\tilde{f}_{\sigma}]$, which takes the form:
\begin{talign*}
g_{\sigma}(x) & := \left(k_+(x,X) - \sigma^2 \bm{1}^\top\right ) k_+(X,X)^{-1}f(X).
\end{talign*}
To remove the dependence on the regularization due to $\sigma$, we let $\sigma \rightarrow \infty$ and get the \gls{CV} \citep{Oates2017}:
\begin{talign*}
g(x) = k_0(x,X) k_{0}(X,X)^{-1}\left[f(X) - \left(\frac{\bm{1}^\top k_0(X,X)^{-1} f(X)}{\bm{1}^\top k_0(X,X)^{-1} \bm{1}}\right) \bm{1}\right].
\end{talign*}
Properties of control functional estimators have been detailed in \cite{Barp2018,Oates2016CF2,Oates2017}.

\subsection{Control Variates Based on Ensembles of Kernels and Polynomials}\label{app:ensem1}

In our experiments, we also employed a linear combination (or \textit{ensemble}) of \gls{CV}s, one based on a kernel and the other on a polynomial. 
Given a training set $X=\{x_i\}_{i=1}^m$, the \gls{CV} is given by:
\begin{talign*}
g_\theta(x) &= \Delta_x \Phi_{\tilde{\theta}}(x) + \nabla_x \Phi_{\tilde{\theta}}(x) \cdot \nabla_{x}\log \pi(x) +
\bar{\theta} \cdot k_0(x,X)\\
& = \tilde{\theta}^\top b(x) + \bar{\theta} \cdot k_0(x,X),
\end{talign*}
where $\theta = (\tilde{\theta},\bar{\theta})$, $\tilde{\theta}=(\theta_1, \ldots, \theta_p)^\top$, $\bar{\theta}=(\bar{\theta}_1, \ldots, \bar{\theta}_m)^\top$,  $\Phi_{\tilde{\theta}}(x)$ is some polynomial of order $k\in \mathbb{N}$.

This form of \gls{CV} was proposed in \citep{South2020} under the \textit{semi-exact control functionals}, where the authors derived a closed-form expression for the parameter vector $\theta$ under the requirements that: (i) $g_\theta(x_i)=f(x_i)$ for $i=1,\ldots,m$ and (ii) $g_\theta = f$ whenever $f$ belongs to a user-specified finite-dimensional vector space spanned by $b_1,\dots,b_p$.
Requirement (ii) is an \emph{exactness} condition, which motivated the name \textit{semi-exact}. Let
\begin{talign*}
B =\begin{bmatrix}
1 & b_{1}(x_1) & \cdots & b_{p}(x_1)\\
\vdots & \vdots & \vdots & \vdots \\
1 & b_{1}(x_m) & \cdots & b_{p}(x_m)
\end{bmatrix}.
\end{talign*}
Then, under regularity conditions, \cite{South2020} showed that $\tilde{\theta}$ and $\bar{\theta}$
can be found by solving:
\begin{talign*}
\begin{bmatrix}
k_0(X,X) & B \\
B^\top & \bm{0}_{p\times{p}}
\end{bmatrix}
\begin{bmatrix}
\tilde{\theta}  \\
\bar{\theta}
\end{bmatrix} =
\begin{bmatrix}
f(X)  \\
\bm{0}_{p\times{1}}
\end{bmatrix},
\end{talign*}
where $\bm{0}_{p\times{1}}$ is a $p\times1$ column vector of zeros and $\bm{0}_{p\times{p}}$ is
a $p\times{p}$ matrix of zeros.
 For the experiments in this paper we do \emph{not} enforce exactness constraints for mini-batch \gls{SGD} algorithm; as shown in \Cref{fig:logistic_results}, the performance of ensemble \gls{CV}s (a kernel with a polynomial) trained by mini-batch \gls{SGD} and solving linear system is comparable.

\section{Convex Stochastic Optimization} \label{app: convex opt}

The following result establishes convergence over a fixed set $\mathcal{V}_p$ which is linear, both in the idealized scenario where we may directly sample from $\Pi$ and in the practical scenario where we approximate $\Pi$ with MCMC. Let $\sigma_{\text{min}}(M)$ and $\sigma_{\text{max}}(M)$ denote the minimum and maximum singular values of a matrix $M$.

\begin{proposition} \label{thm:convergence_convex_case}
Let $\mathcal{L} : \mathcal{U} \rightarrow L^2(\Gamma)$ be a bounded linear operator for some distribution $\Gamma$ on $\mathbb{R}^d$ and let $\tilde{J}(\theta) := \|f - \theta_0 - \mathcal{L} v_\theta \|_{L^2(\Gamma)}^2$ for $\theta \in \mathbb{R}^{p+1}$. Assume that $\exists u \in \mathcal{U}$ solving the Stein equation $\mathcal{L} u = f - \Pi[f]$. 
Furthermore, assume:
\begin{itemize}[leftmargin=*]
    \item {\normalfont Model:} Any $v \in \mathcal{V}_p$ can be expressed as $v_\theta = \sum_{i=1}^p \theta_i u_i$ where $u_1, \ldots, u_p \in \mathcal{U}$. Furthermore, letting $\psi_0 := 1$ and $\psi_i := \mathcal{L}u_i$, we assume that the $\{\psi_i\}_{i=0}^p$ are linearly independent in $L^2(\Gamma)$.
    \item {\normalfont Optimizer:}  The random variables $x_i^{(t)}$ are distributed according to $\Gamma$, such that $x_i^{(s)}$ and $x_j^{(t)}$ are independent whenever $s \neq t$. Let $\theta^{(t)}$ denote the $t$-th iteration of \gls{SGD}, with stochastic gradient at step $t$ based on batch $\textstyle (x_i^{(t)})_{i=1}^b$. 
    Let $  M_{i,j} := \Gamma[ \psi_i \psi_j ]$ and $ \textstyle \big[M_b^{(t)}\big]_{i,j} := \frac{1}{b} \sum_{k=1}^b \psi_i\big(x_k^{(t)}\big) \psi_j\big(x_k^{(t)}\big)$.
    Suppose the learning rate $(\alpha_t)_{t \in \mathbb{N}}$ satisfies:
\begin{talign*}
&\alpha_t = \textstyle \frac{\beta}{\gamma + t}, \quad \beta > \frac{1}{2 \sigma_{\min}(M)}, \quad \gamma > 0,\\
&\alpha_1 \leq \frac{\sigma_{\min}(M^2)}{2 \sigma_{\max}(M)( \sigma_{\max}(\mathbb{E}[(M_b^{(1)})^2])+\sigma_{\min}(M^2)) } 
\end{talign*}
\end{itemize}
Then, there exists $\nu \geq 0$ such that 
$$
\mathbb{E}[\tilde{J}( \theta^{(t)} )] \leq \textstyle \frac{\nu}{\gamma + t} + \|\mathcal{L}\|_{\mathcal{U} \rightarrow L^2(\Gamma)}^2 \inf_{v \in \mathcal{V}_p} \|u-v\|_{\mathcal{U}}^2.
$$
\end{proposition}
The result is in expectation with respect to the law of $x_i^{(t)}$, and guarantees that the \gls{CV}s trained with \gls{SGD} will converge to the optimal \gls{CV} of the form $g = \mathcal{L}v$, $v \in \mathcal{V}_p$. The second term is an upper bound on $\inf_{v \in \mathcal{V}_p} J(v)$, which will be zero when the assumptions of \Cref{thm:zero_variance} or \Cref{thm:zero_variance_poisson} hold.
The case $\Gamma = \Pi$ corresponds to minimization of $J(v)$ over $v \in \mathcal{V}_p$ using \gls{SGD} with exact sampling from $\Pi$, while the case $\Gamma = \frac{1}{m} \sum_{i=1}^m \delta(x_i)$ corresponds to minimization of the empirical risk $\tilde{J}(\theta) = \tilde{J}_m^{\text{LS}}(\theta)$ using \gls{SGD} with mini-batches drawn from the (fixed) training dataset $(x_i)_{i=1}^m$. For the later case, the theorem is presented for $\tilde{J}_m^{\text{LS}}$, but a similar proof technique could be used for $\tilde{J}_m^{\text{V}}$.

The result does not apply to NNs; indeed, \gls{SGD} is not expected to converge to the global minimum of $\tilde{J}$ when a NN is employed since $\tilde{J}$ will be non-convex. Bounds on the error incurred could however be obtained for alternative algorithms including stochastic
gradient Langevin dynamics \citep{Chau2019,Raginsky2017,Zhang2019nonasymptotic}.

\subsection{Proof of \Cref{thm:convergence_convex_case}}

The following elementary lemma will be required:
\begin{lemma} \label{lem: SPD}
Let $A$ and $B$ be positive semi-definite matrices of equal dimension, such that $\sigma_{\min}(A) \geq \sigma_{\max}(B)$.
Then $A - B$ is also a positive semi-definite matrix.
\end{lemma}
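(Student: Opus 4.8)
The plan is to prove the contrapositive-free statement directly by relating it to the eigenstructure of $A - B$. First I would recall that for a symmetric (hence here positive semi-definite) matrix $M$, the smallest singular value equals the smallest eigenvalue, namely $\sigma_{\min}(M) = \min_{\|x\|_2 = 1} x^\top M x$, and the largest singular value equals the largest eigenvalue, $\sigma_{\max}(M) = \max_{\|x\|_2 = 1} x^\top M x$. (Positive semi-definiteness guarantees all eigenvalues are nonnegative, so singular values and eigenvalues coincide.) This variational characterization is the key tool.

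Next I would take an arbitrary vector $x \in \mathbb{R}^n$ and estimate the quadratic form $x^\top (A - B) x = x^\top A x - x^\top B x$. For the first term, $x^\top A x \geq \sigma_{\min}(A) \|x\|_2^2$ by the Rayleigh quotient bound; for the second, $x^\top B x \leq \sigma_{\max}(B) \|x\|_2^2$. Combining these gives
\begin{talign*}
x^\top (A - B) x \geq \left( \sigma_{\min}(A) - \sigma_{\max}(B) \right) \|x\|_2^2 \geq 0,
\end{talign*}
where the last inequality uses the hypothesis $\sigma_{\min}(A) \geq \sigma_{\max}(B)$. Since $A$ and $B$ are symmetric, $A - B$ is symmetric, and a symmetric matrix with nonnegative quadratic form everywhere is positive semi-definite by definition. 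This completes the argument.

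There is no serious obstacle here — the lemma is genuinely elementary. The only point requiring a moment's care is the identification of singular values with eigenvalues, which needs the positive semi-definiteness hypothesis (for a general matrix the singular values are the eigenvalues of $(M^\top M)^{1/2}$, not of $M$ itself); I would state this explicitly so the Rayleigh quotient bounds are justified. One could alternatively phrase the entire proof in terms of eigenvalues from the start, avoiding singular values, but since the ambient theorem (Theorem \ref{thm:convergence_convex_case}) is stated using $\sigma_{\min}$ and $\sigma_{\max}$ notation, it is cleanest to keep that notation and simply remark on the equivalence.
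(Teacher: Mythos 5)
Your argument is correct and is essentially identical to the paper's proof: both bound the quadratic form $x^\top(A-B)x$ from below by $(\sigma_{\min}(A)-\sigma_{\max}(B))\|x\|_2^2$ via the Rayleigh quotient and invoke the hypothesis. Your added remark on why singular values coincide with eigenvalues for positive semi-definite matrices is a reasonable clarification that the paper leaves implicit.
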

\begin{proof}
Let $A$ and $B$ be $d \times d$ dimensional.
For any $x \in \mathbb{R}^d$ we have that 
\begin{talign*}
x^\top (A-B) x = x^\top A x - x^\top B x & \geq \sigma_{\min}(A) \|x\|_2^2 - \sigma_{\max}(B) \|x\|_2^2 \\
& = (\sigma_{\min}(A) - \sigma_{\max}(B)) \|x\|_2^2  \geq 0.
\end{talign*}
\end{proof}

The implication of our linearity assumption on the model is that the parametrized objective function is a quadratic function in $\theta \in \mathbb{R}^{p+1}$; which simplifies the analysis of \gls{SGD}.

\begin{proof}[\Cref{thm:convergence_convex_case}]
From linearity of $\mathcal{L}$, the objective function that we aim to minimize is
\begin{talign*}
\tilde{J}(\theta) = \left\lVert f - \theta_0 - \sum_{i=1}^p \theta_i \mathcal{L} u_i  \right\rVert_{L^2(\Gamma)}^2 = \left\lVert f - \sum_{i=0}^p \theta_i \psi_i  \right\rVert^2_{L^2(\Gamma)},
\end{talign*}
and we have $\psi_0 = 1$ and $\psi_i = \mathcal{L} u_i$, $i=1,\ldots, p$. 
This can be re-expressed in matrix notation as
\begin{talign}
\tilde{J}(\theta) = \theta^\top M \theta - 2 a^\top \theta + \Gamma[f^2] , \label{eq: tilde J matrix}
\end{talign}
where $M_{i,j} = \Gamma[\psi_i \psi_j]$ and $a_i := \Gamma[f \psi_i]$.
Our two cases of interest are $\Gamma = \Pi$ and $\Gamma = \frac{1}{m} \sum_{i=1}^m \delta(x_i)$ for a fixed set $\{x_i\}_{i=1}^m \subset \mathbb{R}^d$.

Our aim is to verify the preconditions of Theorem 4.7 in \cite{Bottou2018}.
If these are satisfied then we may conclude that, under the assumptions on the learning rate in the statement of  \Cref{thm:convergence_convex_case}, for some constant $\nu \geq 0$,
\begin{talign*}
\mathbb{E}\left[\tilde{J}\left(\theta^{(t)}\right)\right] 
\leq \frac{\nu}{\gamma + t} + \inf_{\theta \in \mathbb{R}^{p+1}} \tilde{J}(\theta).
\end{talign*}
In particular, since we have assumed that $\exists u \in \mathcal{U}$ that solves the Stein equation $\mathcal{L}u = f - \Pi[f]$ and that $\mathcal{L} : \mathcal{U} \rightarrow L^2(\Gamma)$ is a bounded linear operator, the same argument used in the proof of  \Cref{thm:zero_variance} shows that $\tilde{J}(\theta) \leq \|\mathcal{L}\|_{\mathcal{U} \rightarrow L^2(\Gamma)}^2 \|u - v_\theta\|_{\mathcal{U}}^2$, so that 
\begin{talign}\label{eq:final_result_thm1}
\mathbb{E}\left[\tilde{J}\left(\theta^{(t)}\right)\right] 
\leq \frac{\nu}{\gamma + t} + \|\mathcal{L}\|_{\mathcal{U} \rightarrow L^2(\Gamma)}^2 \inf_{v \in \mathcal{V}_p} \|u-v\|_{\mathcal{U}}^2, 
\end{talign}
as claimed. 

Theorem 4.7 in \cite{Bottou2018} requires that $\tilde{J}(\theta)$ is continuously differentiable with $\nabla \tilde{J}$ being Lipschitz. 
This is satisfies in our context, with Lipschitz constant $2\sigma_{\max}(M)$. 
The two remaining conditions that we must verify in order to apply Theorem 4.7 of \cite{Bottou2018} are; (i) the strong convexity property
\begin{talign*}
\tilde{J}(\theta) - \tilde{J}(\vartheta) \geq \langle \nabla \tilde{J}(\vartheta) , \theta - \vartheta \rangle + \frac{l}{2} \|\theta - \vartheta\|_2^2 
\end{talign*}
for some $l > 0$, and (ii) the bound 
\begin{talign}
\mathbb{E}[\|\nabla \tilde{J}_b(\theta)\|_2^2] \leq C_1 + C_2 \|\nabla \tilde{J}(\theta)\|_2^2 \label{eq: Bottou bound}
\end{talign}
for some constants $C_1, C_2$ where $\nabla \tilde{J}_b$ is a stochastic estimate of $\nabla \tilde{J}$ based on $b$ samples from $\Gamma$.
See the discussion of (4.9) in \cite{Bottou2018} for why establishing \eqref{eq: Bottou bound} is a sufficient condition for Theorem 4.7.

First we verify condition (i); that the optimization problem is strongly convex in $\theta \in \mathbb{R}^{p+1}$.
From direct computation with \eqref{eq: tilde J matrix} we obtain that $\tilde{J}(\theta)$ is strongly convex if and only if $(\theta - \vartheta)^\top M (\theta - \vartheta) \geq \frac{c}{2} \|\theta - \vartheta\|_2^2$.
Since $M$ is positive semi-definite and the $\psi_i$ were assumed to be linearly independent in $L^2(\Gamma)$, the matrix $M$ is non-singular and $\sigma_{\min}(M) > 0$.
Thus $\tilde{J}$ is strongly convex with strong convexity constant
$c = 2\sigma_{\min}(M)$. 

It remains only to verify condition (ii).
Let $\psi(x) := (\psi_0,
 \psi_1(x), \dots, \psi_p(x))^\top$.
Recall that, in the $t$th step of \gls{SGD}, the gradient $\nabla \tilde{J}$ is unbiasedly estimated with
\begin{talign*}
\nabla \tilde{J}_b(\theta) & := \nabla \left[ \frac{1}{b} \sum_{i=1}^b \left( f\left(x_i^{(t)}\right) - \psi\left(x_i^{(t)}\right)^\top \theta \right)^2 \right] \\
& = - \frac{2}{b} \sum_{i=1}^b \left( f\left(x_i^{(t)}\right) - \psi\left(x_i^{(t)}\right)^\top \theta \right) \psi\left(x_i^{(t)}\right) ,
\end{talign*}
where the $x_i^{(t)}$ are independent samples from $\Gamma$.
Let $\mathrm{f}$ be the vector with entries $\mathrm{f}_i := f(x_i^{(t)})$, let $a_b$ be the vector with entries $a_{b,j} := \frac{1}{b} \sum_{i=1}^b f(x_i^{(t)}) \psi_j(x_i^{(t)})$, so that $\mathbb{E}[a_b] = a$, and $\Psi_{i,j} := \psi_j(x_i^{(t)})$.
Thus 
\begin{talign*}
    \frac{1}{4} \|\nabla \tilde{J}_b(\theta)\|_2^2 
    & = \frac{1}{b^2} \| (f-\Psi \theta)^\top \Psi \|_2^2 = \frac{1}{b^2} (\mathrm{f} - \Psi \theta)^\top \Psi \Psi^\top (\mathrm{f} - \Psi \theta) \\
    & = \frac{1}{b^2} \left( \theta^\top \Psi^\top \Psi \Psi^\top \Psi \theta - 2 \mathrm{f}^\top\Psi \Psi^\top \Psi \theta + \mathrm{f}^\top \Psi \Psi^\top \mathrm{f} \right) \\
    & = \theta^\top M_b^2 \theta - 2 a_b^\top M_b \theta + a_b^\top a_b
\end{talign*}
where $M_b = \frac{1}{b} \Psi^\top \Psi$ satisfies $\mathbb{E}[M_b] = M$. Similarly,
\begin{talign*}
    \frac{1}{4} \|\nabla \tilde{J}(\theta)\|_2^2 
    & = \theta^\top M^2 \theta - 2 a^\top M \theta + a^\top a.
\end{talign*}
Since \eqref{eq: Bottou bound} is equivalent to non-negativity of
\begin{talign}\label{eq:proof_nonnegative}
& \hspace{-10pt} \frac{1}{4} \left( C_1 + C_2 \|\nabla \tilde{J}(\theta)\|_2^2 - \mathbb{E}[\| \nabla \tilde{J}_b(\theta)\|_2^2] \right) \nonumber \\
& = \theta^\top (C_2 M^2 - \mathbb{E}[M_b^2] )\theta  - 2(C_2 a^\top M - a_b^\top M_b) \theta + \left( \frac{1}{4}C_1 + C_2 a^\top a - a_b^\top a_b \right) ,
\end{talign}
using \Cref{lem: SPD} we choose to set
$C_2 = \sigma_{\max}(\mathbb{E}[M_b^2]) / \sigma_{\min}(M^2)$ to ensure that the matrix $C_2 M^2 - \mathbb{E}[M_b^2]$ is semi-positive definite.
Given this choice of $C_2$, it is possible to take $C_1$ large enough to guarantee that the expression in  \Cref{eq:proof_nonnegative} is is non-negative $\forall \theta \in \mathbb{R}^{p+1}$. 
This verifies (ii).
From Theorem 4.7 in \cite{Bottou2018}, the result follows under the stated assumptions on the learning rate $\alpha_t$.
\end{proof}

\section{Numerical Experiments}
\label{appendix:additional_experiments}

Here we discuss further implementation details for each of the examples in the paper, and also provide additional numerical experiments to complement the results in the main text. 

For all experiments in this paper, the specific
parametric forms of \gls{CV}s that were considered were as follows:
\begin{enumerate}
\item The second order
polynomial class was used for polynomial \gls{CV}s, i.e.,
\begin{talign*}
\Phi_{\theta}(x)=\frac{1}{2}x^\top A x + b^\top x,
\end{talign*}
where $A\in\mathbb{R}^{d\times d}$ is a symmetric matrix, $b\in\mathbb{R}^{d}$, and $\theta = (A, b)$.
\item For the kernel \gls{CV}s, the kernel
$k_0(x, x^\prime)$ in \Cref{eq:kernel0} was used, and we followed \citep{Oates2017} in taking
\begin{talign}\label{eq:base_kernel}
k(x, x^\prime) = (1 + \alpha_1\|x\|_2^2 + \alpha_1\|x^\prime\|_2^2)^{-1}\exp{
-(2\alpha_2^2)^{-1}\|x-x^\prime\|_2^2}
\end{talign}
for hyper-parameters $\alpha_1, \alpha_2 > 0$ to be specified.

\item The NN \gls{CV}s were fully connected layers. For the Gaussian process realization experiment,
the NN had 2 layers and each layer had
50 hidden nodes. 
For other experiments, the NN had 6 layers and each layer had 20 hidden nodes. 
The ReLU activation function was used for all neurons except the output neuron, where the identity function was employed. 
\item The ensemble \gls{CV} of polynomial and kernel was the sum of a degree $2$ polynomial \gls{CV} and a kernel interpolant \gls{CV}. In the case of multiple kernels, the same base kernel was used, but with different choices of hyperparameters. 
\end{enumerate}
The hyper-parameters $\alpha_1$ and $\alpha_2$ in the kernel and ensemble \gls{CV}s were selected via 5-fold cross-validation. In all experiments, the reported computing timings do not include the hyper-parameter tuning time. This
is still fair to compare various \gls{CV}s because all \gls{CV}s and
training methods--\gls{SGD} and exact solution--necessitate tuning hyper-parameters.

The remainder of this section is devoted to reporting details of the experiments that were reported in the main text.
In Section \ref{appendix:polynomial_integrands} we describe the illustrative experiment from the main text and also provide additional experiments, not reported in the main text.
Section \ref{appendix:Genz} contains details for the Genz test function experiment and reports additional results, not contained in the main text.
Section \ref{appendix:GP_realisations} contains details for the Gaussian Process experiment.
In Section \ref{app: pollution} we report an additional experiment that considers posterior inference for a model of atmospheric pollution, not contained in the main text. In Section
\ref{app:multi.kernels} we
present the ensemble \gls{CV} of
two kernels and a polynomial used in the last two experiments of this paper: ordinary differential equations and \emph{sonar} dataset.

\subsection{Numerical Integration of Polynomials}
\label{appendix:polynomial_integrands}

We start by comparing a range of \gls{CV}s on polynomial integrands which are integrated against a Gaussian distribution. This is a good benchmark problem since the integrals can be computed in closed-form, and the performance of each method can hence be studied precisely.

\subsubsection*{Implementation Details}

Consider an integrand which is a sum of $p$ polynomials: 
\begin{talign*} 
f(x) & =  \sum_{j=0}^p \prod_{i=1}^d \alpha_{ji} x_i^{\beta_{ji}}
\end{talign*}
where $x = (x_1,\ldots,x_d) \in \mathbb{R}^d, \alpha \in \mathbb{R}^{p \times d} \; \& \; \beta \in \mathbb{N}^{p \times d}$ (for both matrices, rows correspond to a polynomial, and each column to a dimension of the space). We can easily compute the integral of such a polynomial against a Gaussian distribution $\mathcal{N}(0,\Sigma)$ where $\Sigma = \sigma^2 I_{d \times d}$ using well-known Gaussian identities. In particular, denoting $\pi$ the pdf of this Gaussian distribution, we have:
\begin{talign*}
\Pi[f] & = \int_{\mathbb{R}^d} f(x) \pi(x) \mathrm{d}x = \sum_{j=0}^p \prod_{i=1}^d  \alpha_{ji} \delta_{\{\beta_{ji} \in \{0,2,4,...\}\}} \sigma^{\beta_{ji}} (\beta_{ji}-1)!!
\end{talign*}
where $\delta_{\{\beta_{ji} \in \{0,2,4,...\}\}}$ is an indicator function taking value $1$ when $\beta_{ji}$ is pair and $0$ otherwise. Also, $x!!$ denotes the double factorial (also called semi-factorial), which is the product of all integers from 1 to $x$ that have the same parity (odd or even) as $x$. We can therefore use integration of polynomials against a Gaussian distribution as a test-bed for various MC or \gls{CV} methods.

In this case, we have that $\nabla_{x} \log \pi(x) = -x^2/\sigma^2$, and so $\mathcal{L}_{\text{SL}} u = \Delta u + \nabla u \cdot \nabla \log \pi \rangle$ will itself also be a polynomial whenever $u$ is a polynomial.

\subsubsection*{Additional Experiments}

We start by integrating $f(x)=\sum_{j=1}^{d}(1-x_j)$ against a standard Gaussian: $\mathcal{N}(0,I_{d \times d})$. This integrand is particularly well suited for the polynomial-based \gls{CV}s of \cite{Mira2013,Papamarkou2014} since, in this case, the integrand is itself in the class of functions of the form $\mathcal{L}_{\text{SL}} u_\theta$. We use $n=10^4$ design points obtained by sampling IID from a $\mathcal{N}(0, I_{d \times d})$. A $90/10$ split is used for approximation data and MC data. We compare the polynomial-based \gls{CV}s of degree two trained by solving the least-squares through a linear system, and these same \gls{CV}s trained by \gls{SGD}. The experiments are presented in Fig. \ref{fig:polynomial_integrands} (top row). 

The performance of the \gls{SGD} \gls{CV}s is usually worse initially, but approaches that of the linear system solution as $t$ grows. This holds regardless of the initialization of \gls{SGD} (see the blue lines). Such results are not surprising since the objective function is convex. In low dimensions, the computational cost associated with solving the linear system is low and there is hence not much point to using \gls{SGD}. The main advantage of the \gls{SGD} approach can be observed for large $d$, in which case we obtain a performance close, if not equal, to that of solving the linear system, but at a small fraction of the cost. This advantage of \gls{SGD} also increases with $d$. Note that early stopping of \gls{SGD} could provide good performance with a further reduction of the computational cost.

We also provide additional experiments using kernel-based \gls{CV}s in Fig. \ref{fig:polynomial_integrands_2} (bottom row). Similar conclusions can be made from these plots. Firstly, in low dimension, there is not much point using the \gls{SGD} approach over solving the exact least squares problem, but as the dimensionality of the problem grows, the \gls{SGD} methodology can provide close-to-optimal performance at a fraction of the cost. Secondly, we once again have that early stopping of the \gls{SGD} procedure could provide further significant computational gains.

\begin{figure}[h!]
\centering
\includegraphics[width=\textwidth]{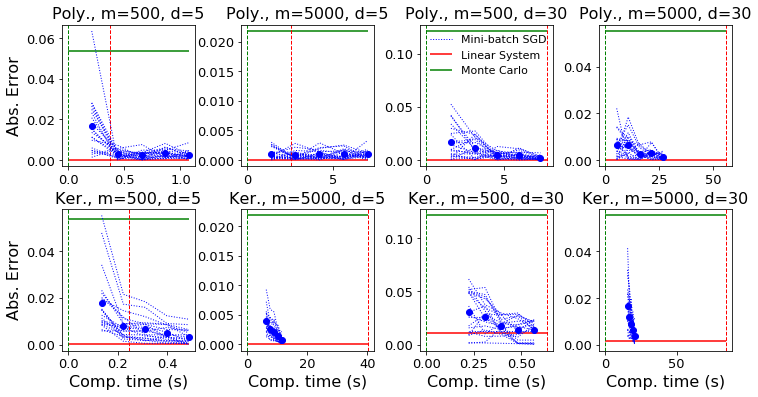}
\caption{\textit{Performance of the polynomial-based \gls{CV}s (top row) and kernel-based \gls{CV}s (bottom row) on polynomial integrands}. We compare \gls{CV}s obtained by solving linear systems with the \gls{CV}s trained using \gls{SGD}. 
The $y$-axis gives the mean absolute error over $20$ different datasets, whilst the dotted line gives the cost of solving the linear system.
Here $20$ realizations (blue dashed lines) are shown and blue dots represent the mean absolute error.
The red lines represent the performance and computational cost of solving the corresponding linear system exactly, our benchmark. 
Similarly, the green lines represent the \gls{MC} estimator with no \gls{CV} used.
}
\label{fig:polynomial_integrands_2}
\end{figure}

\subsection{Genz Test Functions} \label{appendix:Genz}

A popular set of synthetic problems for numerical integration are the Genz test functions introduced in \cite{Genz1984}. These functions, which can all be integrated analytically, were selected to test several difficult scenarios for numerical integration tools based on functional approximation, such as sharp peaks and discontinuities. They are usually defined on $[0,1]^d$, but can easily be transformed to be defined on the whole of $\mathbb{R}^d$, as we discuss next. 

\subsubsection*{Implementation Details}

We consider such a transformation here to keep the setting as close to possible to that of the polynomials. Let $h:\mathcal[0,1]^d \rightarrow \mathbb{R}$ be such a test function. Then, using a change of variables, we get:
\begin{talign*}
\int_{[0,1]^d} h(y) \mathrm{d}y & = \int_{\mathbb{R}^d} h(\Phi(x)) \phi(x) \mathrm{d}x
\end{talign*}
where $\Phi(x)$ is a d-dimensional vector given by $\Phi(x)=(\Phi(x_1),\ldots,\Phi(x_d))$ where $\Phi$ is the cummulative distribution function of a standard Gaussian distribution and $\phi$ is the corresponding probability density function. We therefore have integration problems of the form $\Pi[f] = \int_{\mathbb{R}^d} f(x) \pi(x) \mathrm{d}x$, where $f(x) = h(\Phi(x))$, $\Pi$ is a standard Gaussian, and $h$ is any of the classical Genz functions \citep{Genz1984}, as described in the Table \ref{tab:genz_functions} below. See \url{https://www.sfu.ca/~ssurjano/integration.html} for implementations of these functions in R or MATLAB.

\begin{table}[h!]
\begin{center}
\resizebox{\textwidth}{!}{
\addtolength{\tabcolsep}{3pt}
\begin{tabular}{| c | c | c | } 
\hline 
\textbf{Genz Function} & \textbf{Integrand} & \textbf{Integral} \\ [5pt]
\hline 
& & \\ [-3pt]
 Continuous & $ \displaystyle  \exp\left(-\sum_{i=1}^d a_i |x_i - u_i|\right)$  & $\displaystyle \prod_{i=1}^d a_i^{-1} \left( 2 - \exp(a_i(u_i-1))-\exp(-a_i u_i)\right)$ \\ [15pt]
\hline
 & & \\ [-5pt]
 Corner Peak & 
$ \displaystyle
 \left(1+\sum_{i=1}^d a_i x_i\right)^{-d-1}
$
 & $ \displaystyle \sum_{k=0}^d \sum_{\substack{I \subseteq \{1,\ldots,d\}, \\ |I| = k} } (-1)^{k+d} \left(1+\sum_{i=1}^d a_i - \sum_{j \in I} a_j\right)^{-1} \left(d!\prod_{i=1}^d a_i\right)^{-1}$\\ [15pt]
\hline
&& \\ [-5pt]
Discontinuous &$ \displaystyle  \begin{cases}
0, \text{ if } x_i>u_i \text{ for any } i\\
\exp\left(\sum_{i=1}^d a_i x_i\right), \text{ else}
\end{cases} $ &  $  \displaystyle \prod_{i=1}^d a_i^{-1}\left( \exp(a_i \min(1,u_i))-1 \right) $ \\ [15pt]
\hline
& & \\ [-5pt] 
Gaussian Peak & $ \displaystyle \exp\left(-\sum_{i=1}^d a^2_i (x_i - u_i)^2 \right)$ & $\displaystyle  \left(\frac{\sqrt{\pi}}{2}\right)^d \left(\prod_{i=1}^d a_i^{-1}\right) $ \\
& &  $\displaystyle \times \left(\prod_{i=1}^d \text{Erf}(a_i(1-u_i))-\text{Erf}(-a_i u_i)\right)$\\ [15pt]
\hline
& & \\ [-5pt]
Oscillatory & $ \displaystyle \cos\left( 2\pi u_1 + \sum_{i=1}^d a_i x_i\right)$  & $\displaystyle  \sum_{k=0}^d \sum_{\substack{I \subseteq \{1,\ldots,d\}, \\ |I| = k} } \frac{(-1)^k}{\prod_{i=1}^d a_i}  g\left(2\pi u_1 + \sum_{i=1}^d a_i - \sum_{j \in I} a_j\right)$ \\ [15pt]
& & where $\displaystyle 
g(x) = \begin{cases} 
\sin(x) \text{ if } \text{mod}(d,4) =1\\
- \cos(x) \text{ if } \text{mod}(d,4) =2\\
- \sin(x) \text{ if } \text{mod}(d,4) =3\\
\cos(x) \text{ if } \text{mod}(d,4) =0\\
\end{cases}
$ \\
\hline
& & \\ [-5pt]
Product Peak & $ \displaystyle  \prod_{i=1}^d \left(a_i^{-2} + (x_i - u_i)^2\right)^{-1}$  & $\displaystyle \prod_{i=1}^d a_i\left(\text{arctan}((1-u_i)a_i) - \text{arctan}(-u_i a_i)\right)$ \\ [15pt]
\hline
\end{tabular}
}
\vspace{2mm}
\caption{\textit{Genz Test Functions:} This table contains $6$ test functions defined on $[0,1]^d$, as well as their corresponding integrals against a uniform distribution. The parameter vectors $a = (a_1,\ldots,a_d) \in \mathbb{R}^d_{>0}$ and $u = (u_1,\ldots,u_d) \in [0,1]^d$ can be changed to adapt the difficulty of the integration problem. Their default values are $a = (5,\ldots,5)$ and $u=(0.5,\ldots,0.5)$.}\label{tab:genz_functions}
\end{center}
\end{table}

\subsubsection*{Additional Experiments}

The main numerical results are presented in the main text, but we now highlight additional results. 

Firstly, results (in $d=1$) are provided in Table \ref{table:Genz_functions}. These results focus on kernel-based \gls{CV}s trained with the least-squares objective either by solving the linear system (as per \cite{Oates2017}), or with \gls{SGD} with either $2$ or $5$ epochs. We split the data and assign $50\%$ to constructing the \gls{CV} and $50\%$ for the estimator. In all experiments, the \gls{CV}s provide significant improvement over a MC estimator. Overall, the linear system approach tends to outperform \gls{SGD}, but \gls{SGD} can obtain significant variance reduction at a fraction of the computational cost. Further results are presented in Table \ref{table:Genz_peak_function} in Appendix \ref{appendix:Genz}, which demonstrates that the same conclusion holds for higher-dimensional integrands ($d=5,10,15,20$), and that the $50/50$ split may be suboptimal. Indeed, it is found that assigning a greater proportion of the data on the construction of the \gls{CV} may be preferable, but that this will generally increase computational cost.

\begin{table}[h!]
\begin{center}
\begin{tabular}{|l|l|l|l|l|}
\hline
  \textbf{Integrand} & \textbf{MC} & \textbf{Linear Sys.}  & \textbf{\gls{SGD} 2 Epoc.}  & \textbf{\gls{SGD} 5 Epoc.}  \\ 
  \hline \hline
  Continuous & 2.77e-03& $\bm{3.04}$e-$\bm{04}$& 3.45e-04&  $3.28$e-${04}$ \\
  Corner Peak& 5.76e-03& $\bm{7.07}$e-$\bm{06}$& 1.69e-05& ${9.27}$e-${06}$ \\
  Discontinuous&  2.04e-02& $\bm{2.39}$e-$\bm{03}$& 6.30e-03& ${3.91}$e-${03}$ \\
  Gaussian Peak&   1.47e-03& $\bm{8.84}$e-$\bm{06}$& 1.10e-04& ${1.24}$e-${05}$ \\
  Oscillatory&  4.17e-03& $\bm{3.68}$e-$\bm{06}$& 1.22e-05& ${4.63}$e-${06}$ \\
  Product Peak&  1.37e-03& $\bm{1.79}$e-$\bm{05}$& 1.48e-04& ${2.12}$e-${05}$ \\
  \hline \hline
  Time (sec.) & 7.10e-02 & 5.68e-01 &  1.90e-01 & 4.50e-01  \\
  \hline
 \end{tabular}
 \vspace{2mm}
 \caption{\emph{Performance of kernel \gls{CV}s for the Genz Functions}. We take $n=1000, m=500$. The time presented is an average over repetitions for all six functions (the difference accross integrand was negligeable).}
\label{table:Genz_functions}
\end{center}
\end{table}

Secondly, Table \ref{table:Genz_peak_function} provides additional experiments in the case of the Genz peak function. The table demonstrates that the observation that \gls{SGD} can provide results close to those of LS at a fraction of the price is still true regardless of the dimension.

\begin{table}[h!]
\begin{center}
\begin{tabular}{|l|l|l|l|l|}
\hline
  \textbf{Dim.} & \textbf{MC} & {\textbf{Linear Sys.}} &  {\textbf{\gls{SGD} 2 Epoc.}}   & {\textbf{\gls{SGD} 5 Epoc.}} \\\hline \hline
5&  2.85e-03& $\bm{5.81}$e-$\bm{04}$& 6.51e-04& ${5.84}$e-${04}$ \\
 10 & 2.29e-03& $\bm{1.98}$e-$\bm{04}$& 2.79e-04& ${2.70}$e-${04}$ \\
15 & 2.10e-03& $\bm{4.93}$e-$\bm{04}$& 1.22e-03& ${6.14}$e-${03}$\\
20 & 1.73e-03& $\bm{5.13}$e-$\bm{04}$& 6.35e-04& ${5.90}$e-${04}$\\
\hline \hline
Time (secs.) & 7.00e-02 & 7.60e-01 & 3.30e-01 & 5.95e-01 \\
\hline
\end{tabular}
\vspace{3mm}
\caption{The mean absolute errors and computing times of kernel \gls{CV}s on Genz product peak function
(with parameters
$a=(1.0, \ldots, 1.0)$ and $u=(0.5, \ldots, 0.5)$ of the
same dimension as the integrand)
 of four dimensions: 5, 10, 15 and 20. The total sample size is 1000.}
 \label{table:Genz_peak_function}
 \end{center}
\end{table}

Thirdly, in Table \ref{table:Genz_different_splits}, we provide a comparison of the kernel-based \gls{CV}s for different splits of the data (for solving Stein's equation and MC estimation). We consider four cases: a $50/50$ split (i.e. $50\%$ of the data is used for solving Stein's equation, and $50\%$ for MC estimation), a $70/30$ split, a $90/10$ split and a $100/0$ split. 

The computational cost and mean absolute error (MAE) both depend on the number of data points allocated to each task. The larger we make the proportion of data points allocated to solving Stein's equation, the more expensive the estimator becomes, but this usually comes with an increase in accuracy. Assuming that the number of data points is fixed to $n$. the user is therefore able to chose this split according to the computational power available.

\begin{table}[h!]
\begin{center}
\begin{tabular}{|l|l|l|l|l|}
\hline
{\textbf{Integrand}} & {\textbf{50/50}} & {\textbf{70/30}} &  {\textbf{90/10}} &  {\textbf{100/0}}  \\
\hline \hline
Continuous & $\bm{3.28}$e-$\bm{04}$&  4.82e-04 & 7.40e-04 & ${3.80}$e-${04}$ \\
Corner Peak & $\bm{9.27}$e-$\bm{06}$& 1.57e-05 & 3.26e-05 & ${1.02}$e-${05}$  \\
Discontinuous & 3.91e-03& 7.29e-03 & $\bm{3.2}$e-$\bm{03}$ & ${3.51}$e-${03}$  \\
Gaussian Peak & $\bm{1.24}$e-$\bm{05}$& 2.18e-05 & 2.78e-05 & ${2.05}$e-${04}$ \\
Oscillatory & $\bm{4.63}$e-$\bm{06}$ & 8.47e-06 & 1.67e-05 & 3.25e-05 \\
Product Peak & $\bm{2.12}$e-$\bm{05}$ & ${2.20}$e-${05}$ & 3.03e-05 & 3.63e-05 \\
  \hline \hline
Time (secs.) & 4.70e-01 & 5.79e-01 & 6.89e-01  & 7.40e-01 \\
\hline
\end{tabular}
\vspace{3mm}
\caption{The mean absolute errors of kernel \gls{CV} methods, as a function of the train/test data split. 
The sample size fixed at 1000, and
four train/test splits are used: 50/50, 70/30, 90/10, and 100/0.
The computing times for 5 epochs of mini-batch \gls{SGD} training
are shown in the bottom row.}
\label{table:Genz_different_splits}
\end{center}
\end{table}

\subsection{Integrating Gaussian Processes} \label{appendix:GP_realisations}

We are integrating realizations of a Gaussian process (GP) \citep{Rasmussen2006} with mean function $m(x)=0$ and kernel function
\begin{talign*}
c(x,y;\lambda,\sigma) & =  \lambda^2 \exp\left(-\frac{\|x-y\|_2^2}{2\sigma^2}\right) = \lambda^2 (2 \pi \sigma^2)^{\frac{d}{2}} \phi\left(x|y,\sigma^2,I_{d \times d}\right).
\end{talign*}

\begin{figure}[t]
\centering
\begin{subfigure}{0.35\textwidth}
\includegraphics[width=0.95\linewidth]{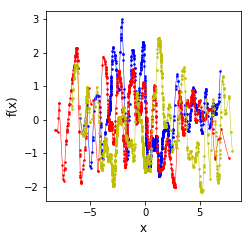} 
\label{fig:subim1}
\end{subfigure}
\begin{subfigure}{0.53\textwidth}
\includegraphics[width=0.95\linewidth]{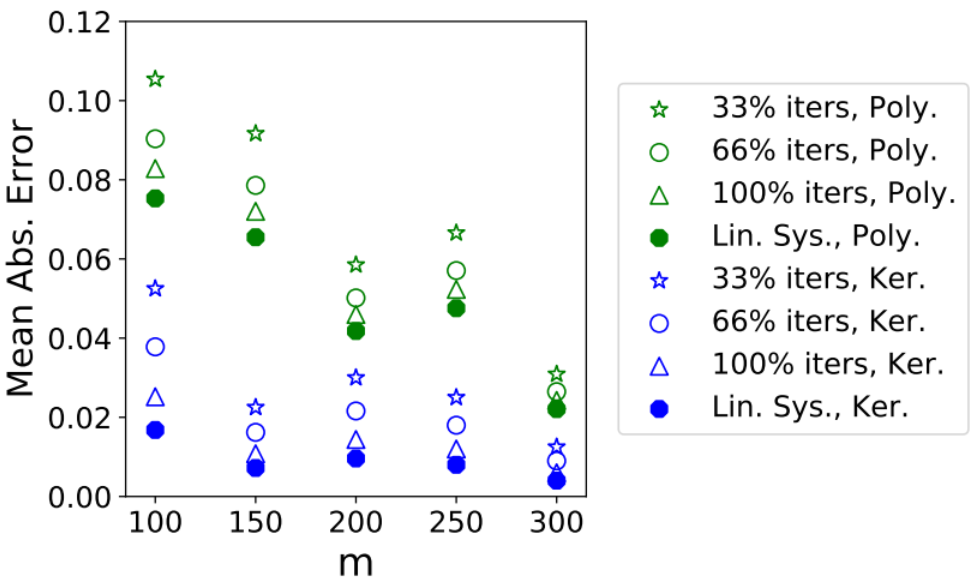}
\label{fig:subim2}
\end{subfigure}
\caption{\emph{Left:} Three realizations from a Gaussian Process.
\emph{Right:} Mean absolute errors
of kernel and polynomial \gls{CV}s evaluated
after $33\%$, $66\%$ and $100\%$ \gls{SGD} training. 
}
\label{fig:appendix_gaussian_process}
\end{figure}
The integral is taken with respect to a mixture of Gaussian distributions with probability density function: $\pi(x) = \sum_{l=1}^L \rho_l \phi(x|\mu_l,\Sigma_l)$, where $\rho = (\rho_1,\ldots,\rho_l) \in [0,1]^d$ is a vector of mixture weights satisfying $\sum_{l=1}^L \rho_l = 1$. For our problems the mean vectors $\mu_1,\ldots,\mu_L$ are generated at random from a zero-mean Gaussian distribution with covariance $3 I_{d \times d}$, and random covariance matrices $\Sigma_1,\ldots,\Sigma_L$ are obtained by taking a matrix $A_l$ with entries uniformly random on $[0,1)$, then setting the covariance $\Sigma_l = A_l^\top A_l$. The mixture weights are also simulated. We simulate the unweighted mixture weights
from a uniform distribution between 0 and 1, and then we normalize them to have
mixture weights.
For this mixture of Gaussians, the score function is given by:
\begin{talign*}
\nabla_x \log \pi(x) =  \frac{\sum_{l=1}^L \rho_l \nabla_x \phi(x|\mu_l,\Sigma_l)}{\sum_{l=1}^L \rho_l \phi(x|\mu_l,\Sigma_l)} = \frac{\sum_{l=1}^L \rho_l \phi(x|\mu_l,\Sigma_l) \Sigma_l^{-1} (x-\mu_l)}{\sum_{l=1}^L \rho_l \phi(x|\mu_l,\Sigma_l)}.
\end{talign*}
We note that the integral of the mean function is $\Pi[m]=0$, and the integrated covariance function is given by:
\begin{talign*}
\Pi[c(x,\cdot)] 
& = 
\lambda^2 (\sqrt{2 \pi} \sigma)^d \sum_{l=1}^L \rho_l \phi\left(x|\mu_l,\Sigma_l +\sigma^2 I_{d \times d}\right).
\end{talign*}
Finally, the integral of the covariance function with respect to the both variables is:
\begin{talign*}
\Pi\Pi[c] 
& =
\lambda^2 (\sqrt{2\pi} \sigma)^d  \sum_{l,m=1}^L \rho_l \rho_m \phi\left(\mu_l \big|\mu_m, \Sigma_l +\Sigma_m+\sigma^2 I_{d \times d}\right).
\end{talign*}
These identities allow us to easily simulate a draw from a Gaussian process and its integral jointly. Indeed, under a Gaussian process model, the vector $(f(x_1),f(x_2),\ldots,f(x_n),\Pi[f])$ is jointly distributed as a multivariate Gaussian distribution with mean $(m(x_1),m(x_2),\ldots,m(x_n),\Pi[m])$ and covariance:
\begin{talign*}
\begin{bmatrix}
    c(x_1,x_1) & c(x_1,x_2)  & \dots  & c(x_1,x_n) & \Pi[c(x_1,x)] \\
    c(x_2,x_1) & c(x_2,x_2)  & \dots  & c(x_2,x_n) & \Pi[c(x_2,x)] \\
    \vdots & \vdots & \ddots & \vdots & \vdots \\
    c(x_n,x_1) & c(x_n,x_2)  & \dots  & c(x_n,x_n) & \Pi[c(x_n,x)] \\
    \Pi[c(x,x_1)] & \Pi[c(x,x_2)] & \dots & \Pi[c(x,x_n)] & \Pi\Pi[c]
\end{bmatrix}.
\end{talign*}
This procedure therefore allows us to create a wide range of examples of varying complexity, by changing the dimension $d$ of the domain, the number $L$ of mixture components and the parameters $\lambda$ and $\sigma$ of the GP covariance function.

In this experiment, all of the samples $x_i$ are included in the training dataset (i.e. $m=n$). We investigate
the performance of two objective functions: $\tilde{J}_{m}^{\text{V}}$
and $\tilde{J}_{m}^{\text{LS}}$. The main
results of this experiment are
displayed in  \Cref{fig:GPrealisations_experiments}.
When implementing \gls{SGD} algorithm, the
batch size is 8 and the number of 
training epochs is 10 for kernel \gls{CV},
and 25 for other \gls{CV}s.

\subsection{Posterior Inference for a Model of Atmospheric Pollutant Detection} \label{app: pollution}

We start with the problem of computing posterior expectations for some Bayesian inference problem linked to the LIDAR (light detection and ranging) experiment, which considers the reflection of light emitted by some laser to detect pollutants in the atmosphere \citep{Ruppert2003}. This dataset consists of $m=221$ observations of the distance travelled before the light is reflected (denoted $\{l_i\}_{j=1}^m$), and of the log-ratios of received light from two laser sources (denoted $\{r_i\}_{j=1}^m$).

Following \cite{Chen2018}, we consider regression with a mean-zero GP model: $r_j = g(l_j) +\epsilon_j$, where $\epsilon_j \sim \mathcal{N}(0,\alpha)$, $\alpha=0.04$. The kernel $c(l,l') = \lambda_1^2 \exp(-\lambda_2^2\|l-l'\|_2^2/2)$ was parameterized with $\phi_1 = \log \lambda_1$ and $\phi_2 = \log \lambda_2$, and a Cauchy prior was placed on $\phi = (\phi_1,\phi_2)$. Denote by $\Pi$ the posterior measure over $\phi$ given the observed data.

We are interested in computing posterior moments $\Pi[\phi_1],\Pi[\phi_2],\Pi[\phi_1^2]$ and $\Pi[\phi_2^2]$, as well as the marginal log-likelihood $\Pi[p(r|l,\phi)]$. The posterior marginal likelihood of the data is defined as the integral of the likelihood $p(y|X,\phi)$ with respect to the posterior on the parameters. This likelihood can be expressed in log-form as below:
\begin{talign*}
\log p(y|X,\phi) & =  -\frac{1}{2}y^\top \left(C_\phi+\alpha I_{n\times n}\right)^{-1} y - \frac{1}{2}\log \left|C_{\phi}+\alpha I_{n \times n}\right| - \frac{n}{2} \log 2 \pi,
\end{talign*}
where $C_{\phi}$ denotes the $n \times n$ matrix with entries $(C_{\phi})_{ij} = c(x_i,x_j)$, where we have made the dependence explicit on the hyper-parameters $\phi$ of the covariance function.

To do so, we use an adaptive MCMC algorithm to sample from $\Pi$. Results are presented in  \Cref{fig:LIDAR_numbersamples} for \gls{CV}s based on polynomials, kernels and NNs all trained with \gls{SGD}. 
When implementing \gls{SGD} algorithm, the
batch size is 8 and the number of 
training epochs is 10 for kernel \gls{CV},
and 25 for other \gls{CV}s.
We notice that the performance can vary significantly based on the Stein space and operator. This clearly demonstrates the advantages of our general methods which can be easily adapted to the problem at hand.

\begin{figure}[h!]
\centering
\includegraphics[width=0.98\textwidth]{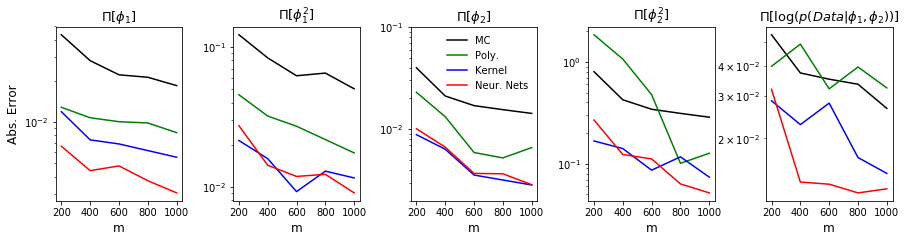}
\caption{\textit{Parameter Inference for Model of Atmospheric Pollutants}. We compute the first two posterior moments of the kernel parameters, as well as the marginal likelihood. }
\label{fig:LIDAR_numbersamples}
\end{figure}

\subsection{Parameter Inference for Ordinary Differential Equations}\label{app:multi.kernels}
In this experiment and the \emph{Sonar Dataset} experiment, we employ an ensemble \gls{CV} of multiple kernels with a polynomial.
Similar to ensemble \gls{CV} of a kernel and a polynomial,
the ensemble \gls{CV} of multiple kernels and a polynomial employs
the sum of a polynomial \gls{CV} and two kernel interpolant.
 We employ two kernels and a polynomial. The two kernels are of the same form as $k_0(x, x^\prime)$ in  \Cref{eq:kernel0}, but they have different hyper-parameters. When
implementing this ensemble \gls{CV}, the kernel in \Cref{eq:base_kernel} is used and it is defined by two hyper-parameters $\alpha_1$ and $\alpha_2$. For two kernels, we chose values of $\alpha_2$ by the median heuristic, setting
one to
\begin{talign*}
\ell = \sqrt{\frac{1}{2}\text{Median}\{\|x_i - x_j\|_2^{2}: 1\leq i<j\leq{m}\}},
\end{talign*}
and the other kernel to $\sqrt{2}\ell$. The other hyper-parameter $\alpha_1$ is the same
for two kernels, and it is tuned via
5-fold cross validation over a grid $\{1.0e$+$6$, $1.0e$+$5$, $1.0e$+$4$, $1.0e$+$3$, $1.0e$+$2$, $1.0e$+$1$, $1.0$, $1.0e$-$1$, $1.0e$-$2\}$. For this ensemble \gls{CV} of multiple kernels and a polynomial, we do not derive the exact solution of parameters, but rather we implement our \gls{SGD} framework to learn this ensemble \gls{CV} on a training dataset.

\subsection{High-dimensional Bayesian Logistic Regression on Sonar Data} \label{appendix:sonar_data}
In this final example, we consider Bayesian logistic regression applied to sonar data from \cite{dheeru2017uci,gorman1988analysis}, as considered in \cite{South2019}.
The parameter is the 61-dimensional regression coefficient $\beta$ and
contains information about the energy frequencies being reflected from either a
metal cylinder ($y = 1$) or a rock ($y = 0$). 
MCMC was used to sample from the posterior of $\beta$ as described in \cite{South2019} and the full output constitutes our ground truth. 
Our task is to approximate the posterior probability that an unlabeled data point $z$ corresponds to a metal cylinder, rather than rock, based on a subset of size $m$ from the MCMC output.
Thus, as for the Madelon data, $f(\beta)=(1+\exp(-z^\top \beta))^{-1}$.

\begin{figure}[ht!]
\centering
\includegraphics[width=.6\textwidth,trim={1mm 0 0mm 0},clip]{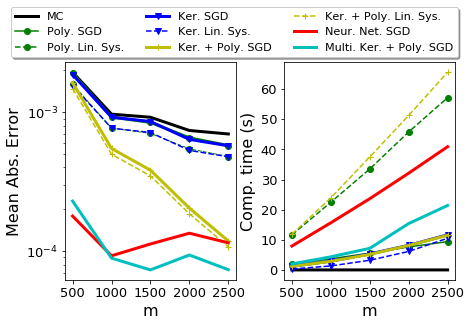}
\caption{\emph{Sonar Dataset}. 
The mean absolute error (left) and compute times (right), as a function of the size $m$ of the training set; based on 20 repetitions. 
}
\label{fig:logistic_results}
\vspace{-3mm}
\end{figure}
We used the same setting as for the Madelon dataset: all MCMC samples were included in the training dataset (i.e. $m=n$), we used batch sizes $b = 8$ over 25 epochs in \gls{SGD} and the loss was $J^{\text{LS}}_m$.  \Cref{fig:logistic_results} compares the performance of different \gls{CV} methods. The two ensemble \gls{CV}s and the NNs perform significantly better than other \gls{CV}s. 
When $m < 1000$, the NNs yield the smallest mean absolute errors, followed by the \gls{CV} with multiple kernels and a polynomial. When $m \geq 1000$, the ensemble
\gls{CV} surpasses NNs. One possible explanation is that for all values of $m$ we used the same multi-layer perceptron (MLP) with
6 layers and 20 nodes in each of them.
Therefore, the NNs size (capacity) remains the same while the training data size $m$ increases. Further growing the depth of NN could lead to an improved performance.
Furthermore, the results for polynomials and kernels demonstrate that our general framework based on \gls{SGD} can achieve comparable MAE with exactly solving the linear systems, but with a fraction of the associated computational overhead.
The compute time of NN in  \Cref{fig:logistic_results} does not capture the time required to manually calibrate \gls{SGD}, so that the ``effective'' compute time is much higher than reported.



\end{document}